\newtheorem{theorem}{Theorem}[section]
\newtheorem{lemma}[theorem]{Lemma}
\newtheorem{assumption}[theorem]{Assumption}
\newtheorem{thm}{Theorem}
\newtheorem{lem}{Lemma}
\titlespacing\section{0pt}{0pt plus 0pt minus 0pt}{0pt plus 0pt minus 0pt}
\titlespacing\subsection{0pt}{0pt plus 0pt minus 0pt}{0pt plus 0pt minus 0pt}
\titlespacing\subsubsection{0pt}{0pt plus 0pt minus 0pt}{0pt plus 0pt minus 0pt}
\title{An Ode to an ODE}
\author{%
  Krzysztof Choromanski \thanks{equal contribution}\\
  Google Brain \& Columbia University\\
   \And
  Jared Quincy Davis \footnotemark[1]\\
  DeepMind \& Stanford University\\
   \And
  Valerii Likhosherstov \footnotemark[1]\\
  University of Cambridge\\
   \And
  Xingyou Song\\
  Google Brain\\
   \And
  Jean-Jacques Slotine\\
Massachusetts Institute of Technology \\
   \And
  Jacob Varley\\
  Google Brain\\
  \And 
  Honglak Lee\\
  Google Brain\\
   \And 
  Adrian Weller\\
  University of Cambridge \& The Alan Turing Institute\\
   \And
  Vikas Sindhwani\\
  Google Brain\\
}
\begin{document}

\maketitle

\vspace{-0.4cm}
\begin{abstract}
\vspace{-0.2cm}
We present a new paradigm for Neural ODE algorithms, called $\mathrm{ODEtoODE}$, where time-dependent parameters of the main flow evolve according to a matrix flow on the orthogonal group $\mathcal{O}(d)$. This nested system of two flows, where the parameter-flow is constrained to lie on the compact manifold, provides stability and effectiveness of training and provably solves the gradient vanishing-explosion problem which is intrinsically related to training deep neural network architectures such as Neural ODEs. Consequently, it leads to better downstream models, as we show on the example of training reinforcement learning policies with evolution strategies, and in the supervised learning setting, by comparing with previous SOTA baselines. We provide strong convergence results for our proposed mechanism that are independent of the depth of the network, supporting our empirical studies. Our results show an intriguing connection between the theory of deep neural networks and the field of matrix flows on compact manifolds.
\end{abstract}

\section{Introduction}
Neural ODEs \cite{chen2018neural, chang2018reversible, haber2017stable} are natural continuous extensions of deep neural network architectures, with the evolution of the intermediate activations governed by an ODE:
\begin{equation} \frac{d\mathbf{x}_{t}}{dt} = f(\mathbf{x}_{t}, t, \theta) ,\label{eq:node}\end{equation} parameterized by $\theta\in {\mathbb{R}^n}$ and where $f:\mathbb{R}^{d} \times \mathbb{R} \times \mathbb{R}^{n} \rightarrow \mathbb{R}^{d}$ is some nonlinear mapping defining dynamics. A solution to the above system with initial condition $\mathbf{x}_{0}$ is of the form:
$$
\mathbf{x}_t = \mathbf{x}_0 + \int_{t_0}^t f(\mathbf{x}_{s}, s, \theta) ds, \label{eq:IVP_ODE}
$$ 
and can be approximated with various numerical integration techniques such as Runge-Kutta or Euler methods \cite{Sli2003AnIT}. The latter give rise to discretizations:
$$
\mathbf{x}_{t+dt} = \mathbf{x}_t + f(\mathbf{x}_t, t, \theta) dt, 
$$ 
that can be interpreted as discrete flows of ResNet-like computations \cite{HeZRS16} and establish a strong connection between the theory of deep neural networks and differential equations. This led to successful applications of Neural ODEs in machine learning. Those include in particular efficient time-continuous normalizing flows algorithms \cite{puchen} avoiding the computation of the determinant of the Jacobian (a computational bottleneck for discrete variants), as well as modeling latent dynamics in time-series analysis, particularly useful for handling irregularly sampled data \cite{time_series}.
Parameterized neural ODEs can be efficiently trained via adjoint sensitivity method \cite{chen2018neural} and are characterized by constant memory cost, independent of the depth of the system, with different parameterizations encoding different weight sharing patterns across infinitesimally close layers.

Such Neural ODE constructions %allow to architect 
enable deeper models %that otherwise would not 
than would not otherwise be possible with a fixed computation budget; however, it has been noted that training instabilities and the problem of vanishing/exploding gradients can arise during the learning of very deep systems \cite{pascanu, frasconi, deep_learning}.

To resolve these challenges for discrete recurrent neural network architectures, several improvements relying on transition transformations encoded by orthogonal/Hermitian matrices were proposed \cite{unitary, ort_0}. Orthogonal matrices, while coupled with certain classes of nonlinear mappings, provably preserve norms of the loss gradients during backpropagation through layers, yet they also incur potentially substantial Riemannian optimization costs \cite{edelman, stoch_1, hairer, absil}. Fortunately, there exist several efficient parameterizations of the subgroups of the orthogonal group $\mathcal{O}(d)$ that, even though in principle reduce representational capacity, in practice produce high-quality models and bypass Riemannian optimization
\cite{cwy, rahman, jing}.
All these advances address discrete settings and thus it is natural to ask what can be done for continuous systems, which by definition are deep.

In this paper, we answer this question by presenting a new paradigm for Neural ODE algorithms, called $\mathrm{ODEtoODE}$, where time-dependent parameters of the main flow evolve according to a matrix flow on the orthogonal group $\mathcal{O}(d)$. 
Such flows can be thought of as analogous to sequences of orthogonal matrices in discrete structured orthogonal models.
By linking the theory of training Neural ODEs with the rich mathematical field of matrix flows on compact manifolds, we can reformulate the problem of finding efficient Neural ODE algorithms as a task of constructing expressive parameterized flows on the orthogonal group. We show in this paper on the example of orthogonal flows corresponding to the so-called \textit{isospectral flows} \cite{bro, brockett1991dynamical, chu1988isospectral}, that such systems studied by mathematicians for centuries indeed help in training Neural ODE architectures (see: ISO-ODEtoODEs in Sec. \ref{sec:odetoode_types}).
There is a voluminous mathematical literature on using isospectral flows as continuous systems that solve a variety of combinatorial problems including sorting, matching and more \cite{bro, brockett1991dynamical}, but to the best of our knowledge, we are the first who propose to learn them as stabilizers for training deep Neural ODEs. 

Our proposed nested systems of flows, where the parameter-flow is constrained to lie on the compact manifold, provide stability and effectiveness of training, and provably solve the gradient vanishing/exploding problem for continuous systems. Consequently, they lead to better downstream models, as we show on a broad set of experiments (training reinforcement learning policies with evolution strategies and supervised learning). We support our empirical studies with strong convergence results, independent of the depth of the network. We are not the first to explore nested Neural ODE structure (see: \cite{anode_v2}). Our novelty is in showing that such hierarchical architectures can be significantly improved by an entanglement with flows on compact manifolds. 

%\cite{nevergrad}.

To summarize, in this work we make the following contributions:

\begin{itemize}[wide, labelindent=0pt]
    \item We introduce new, explicit constructions of non-autonomous nested Neural ODEs ($\mathrm{ODEtoODEs}$) 
    where parameters are defined as rich functions of time evolving on compact matrix manifolds (Sec. \ref{sec:algorithm}). We present two main architectures: gated-$\mathrm{ODEtoODEs}$ and $\mathrm{ISO}$-$\mathrm{ODEtoODEs}$ (Sec. \ref{sec:odetoode_types}).
    \item We establish convergence results for $\mathrm{ODEtoODEs}$ on a broad class of Lipschitz-continuous objective functions, in particular in the challenging reinforcement learning setting (Sec. \ref{sec:rl_theory}).
    \item We then use the above framework to outperform previous Neural ODE variants and baseline architectures on RL tasks from $\mathrm{OpenAI}$ $\mathrm{Gym}$ and the $\mathrm{DeepMind}$ $\mathrm{Control}$ $\mathrm{Suite}$, and simultaneously to yield strong results on image classification tasks. To the best of our knowledge, we are the first to show that well-designed Neural ODE models with compact number of parameters make them good candidates for training reinforcement learning policies via evolutionary strategies (ES) \cite{structured}.
\end{itemize}
%\vspace{-1.5mm}
All proofs are given in the Appendix. We conclude in Sec. \ref{sec:conclusion} and discuss broader impact in Sec. \ref{broader_impact}.

% Neural Ordinary Differential Equations (Neural ODEs) \cite{neural_ode} are a rich family of models, which use continuous-time ODEs to approximate nonlinear mappings. They have numerous benefits, such as invertibility \cite{ffjord}, robustness \cite{robustness_ode}, 

% However, training such models under this dynamical regime can prove to be unstable. One common practical issue is due to exploding/vanishing gradients, which arise from numerical instability and optimization of infinite dimensional operators. The classic adjoint method \cite{neural_ode} has been noted to be memory inefficient and potentially calculate incorrect gradients as intermediate ReLU activations may not be reversible \cite{anode, anode_v2}, which then correct these issues by using checkpointing methods.

% Our main contributions are as follows:

% \begin{itemize}
% \item We restrict our weight matrices to the orthogonal group $\mathcal{O}(d)$, which produces a specific class of stable matrix flows in the Neural ODE. 
% \item We further prove method upper/lower bounds the gradient norms which prevents the exploding/vanishing gradient issues respectively. We apply this result to establish convergence results on a broad class of Lipschitz-continuous objective functions.
% \end{itemize}

%\vspace{-3mm}
\section{Related work}
%\vspace{-3mm}
Our work lies in the intersection of several fields: the theory of Lie groups, Riemannian geometry and deep neural systems. We provide an %additional 
overview of the related literature below.

\textbf{Matrix flows.} Differential equations (DEs) on manifolds lie at the heart of modern differential geometry \cite{Ciarlet2005AnIT} which is a key ingredient to understanding structured optimization for stabilizing neural network training \cite{stoch_1}. In our work, we consider in particular matrix gradient flows on $\mathcal{O}(d)$ that are solutions to trainable matrix DEs. In order to efficiently compute these flows, we leverage the theory of compact Lie groups that are compact smooth manifolds equipped with rich algebraic structure \cite{lee, compact_lie, olver}. For efficient inference, we apply local linearizations of $\mathcal{O}(d)$ via its Lie algebra (skew-symmetric matrices vector spaces) and exponential mappings \cite{warner} (see: Sec. \ref{sec:algorithm}). 

\textbf{Hypernetworks.} An idea to use one neural network (hypernetwork) to provide weights for another network \cite{ha_1} can be elegantly adopted to the nested Neural ODE setting, where \cite{anode_v2} proposed to construct time-dependent parameters of one Neural ODE as a result of concurrently evolving another Neural ODE. While helpful in practice, this is insufficient to fully resolve the challenge of vanishing and exploding gradients. We expand upon this idea in our gated-$\mathrm{ODEtoODE}$ network, by constraining the hypernetwork to produce skew-symmetric matrices that are then translated to those from the orthogonal group $\mathcal{O}(d)$ via the aforementioned exponential mapping.

\textbf{Stable Neural ODEs.} On the line of work for stabilizing training of Neural ODEs, \cite{how_to_train_neural_ode} proposes regularization based on optimal transport, while \cite{stabilizing_neural_ode} adds Gaussian noise into the ODE equation, turning it into a stochastic dynamical system. \cite{augmented_neural_ode} lifts the ODE to a higher dimensional space to prevent trajectory intersections, while \cite{towards_robust_stochastic_differential_equations, stable_neural_flows} add additional terms into the ODE, inspired by well known physical dynamical systems such as Hamiltonian dynamics. 

\textbf{Orthogonal systems.} For classical non-linear networks, a broad class of methods which improve stability and generalization involve orthogonality constraints on the layer weights. Such methods range from merely orthogonal initialization \cite{orthogonal_init, orthogonal_init_proof} and orthogonal regularization \cite{BansalCW18, XieXP17} to methods which completely constrain weight matrices to the orthogonal manifold throughout training using Riemannian optimization \cite{lee}, by projecting the unconstrained gradient to the orthogonal group $\mathcal{O}(d)$ \cite{shalit}. Such methods have been highly useful in preventing exploding/vanishing gradients caused by long term dependencies in recurrent neural networks (RNNs) \cite{ortho_1, ortho_2}. \cite{dynamical_isometry_cnn, dynamical_isometry_rnn} note that \textit{dynamical isometry} can be preserved by orthogonality, which allows training networks of very large depth.

%\vspace{-3.0mm}
\section{Improving Neural ODEs via flows on compact matrix manifolds}
\label{sec:algorithm}
%\vspace{-2mm}
Our core $\mathrm{ODEtoODE}$ architecture is the following nested Neural ODE system: 
%\vspace{-1.1mm}
\begin{equation}
\label{main_eq}
\begin{cases} 
\dot{\mathbf{x}}_{t} = f(\mathbf{W}_{t}\mathbf{x}_{t}) \\ 
\dot{\mathbf{W}}_{t}  = \mathbf{W}_{t}b_{\psi}(t,\mathbf{W}_{t}) \\
\end{cases}
\end{equation}
for some function $f:\mathbb{R} \rightarrow \mathbb{R}$ (applied elementwise), and a parameterized function: $b_{\psi}:\mathbb{R} \times \mathbb{R}^{d \times d} \rightarrow \mathrm{Skew}(d)$, where $\mathrm{Skew}(d)$ stands for the vector space of skew-symmetric (antisymmetric) matrices in $\mathbb{R}^{d \times d}$. We take $\mathbf{W}_{0} \in \mathcal{O}(d)$, where the \textit{orthogonal group} $\mathcal{O}(d)$ is defined as: 
$\mathcal{O}(d) = \{\mathbf{M} \in \mathbb{R}^{d \times d}: \mathbf{M}^{\top}\mathbf{M}=\mathbf{I}_{d}\}$. 
%\vspace{-5mm}
\begin{figure}[t]
%\vspace{-8.0mm}
  \label{fig:benchmark}
  \centering
	\includegraphics[keepaspectratio, width=0.99\textwidth]{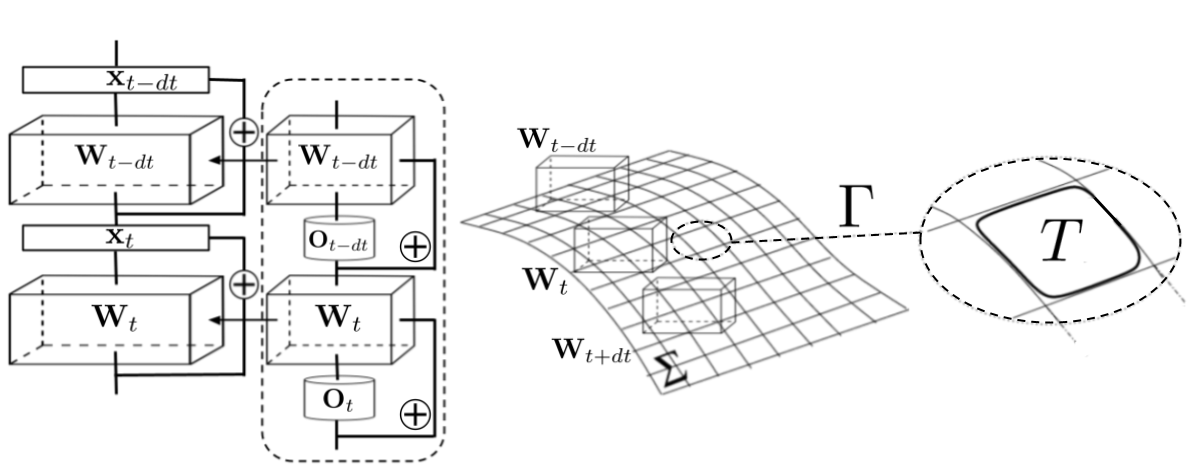}
  %\vspace{-3mm}	
  \caption{\small{Schematic representation of the discretized $\mathrm{ODEtoODE}$ architecture. On the left: nested ODE structure with parameters of the main flow being fed with the output of the matrix flow (inside dashed contour). On the right: the matrix flow evolves on the compact manifold $\Sigma$, locally linearized by vector space $T$ and with mapping $\Gamma$ encoding this linearization 
  ($\mathbf{O}_{t}=b_{\psi}(t,\mathbf{W}_{t})$).}}
\label{fig:scheme}
%\vspace{-1.5mm}
\end{figure}
It can be proven \cite{lee} that under these conditions $\mathbf{W}_{t} \in \mathcal{O}(d)$ for every $t \geq 0$. In principle $\mathrm{ODEtoODE}$ can exploit arbitrary compact matrix manifold $\Sigma$ (with $b_{\psi}$ modified respectively so that second equation in Formula \ref{main_eq} represents on-manifold flow), yet in practice we find that taking $\Sigma = \mathcal{O}(d)$ is the most effective. Furthermore, for $\Sigma=\mathcal{O}(d)$, the structure of $b_{\psi}$ is particularly simple, it is only required to output skew-symmetric matrices. Schematic representation of the $\mathrm{ODEtoODE}$ architecture is given in Fig. \ref{fig:scheme}. 
We take as $f$ a function that is non-differentiable in at most finite number of its inputs and such that $|f^{\prime}(x)|=1$ on all others (e.g. $f(x)=|x|$).

\subsection{Shaping ODEtoODEs}
\label{sec:odetoode_types}
An $\mathrm{ODEtoODE}$ is defined by: $\mathbf{x}_{0}$, $\mathbf{W}_{0}$ (initial conditions) and $b_{\psi}$. Vector $\mathbf{x}_{0}$ encodes input data e.g. state of an RL agent (Sec. \ref{sec:rl}) or an image (Sec. \ref{sec:supervised}). Initial point $\mathbf{W}_{0}$ of the matrix flow can be either learned (see: Sec. \ref{sec:learning}) or sampled upfront uniformly at random from Haar measure on $\mathcal{O}(d)$. 
We present two different parameterizations of $b_{\psi}$, leading to two different classes of $\mathrm{ODEtoODE}$s:
%\vspace{-2.5mm}
\paragraph{$\mathbf{ISO}$-$\mathbf{ODEtoODE}$s:} Those leverage a popular family of the \textit{isospectral flows} (i.e. flows preserving matrix spectrum), called \textit{double-bracket flows}, and given as: $
\mathbf{\dot{H}}_{t}  = [\mathbf{H}_{t}, [\mathbf{H}_{t},\mathbf{N}]],
$ where: $\mathbf{Q}\overset{\mathrm{def}}{=}\mathbf{H}_{0},\mathbf{N} \in \mathrm{Sym}(d) \subseteq \mathbb{R}^{d \times d}$, $\mathrm{Sym}(d)$ stands for the class of symmetric matrices in $\mathbb{R}^{d \times d}$ and $[]$ denotes the \textit{Lie bracket}: $[\mathbf{A},\mathbf{B}]\overset{\mathrm{def}}{=}\mathbf{AB}-\mathbf{BA}$. Double-bracket flows with customized parameter-matrices $\mathbf{Q},\mathbf{N}$ can be used to solve various combinatorial problems such as sorting, matching, etc. \cite{brockett1991dynamical}. It can be shown that $\mathbf{H}_{t}$ is similar to $\mathbf{H}_{0}$ for every $t \geq 0$, thus we can write: $\mathbf{H}_{t} = \mathbf{W}_{t}\mathbf{H}_{0}\mathbf{W}_{t}^{-1}$, where $\mathbf{W}_{t} \in \mathcal{O}(d)$. The corresponding flow on $\mathcal{O}(d)$ has the form: $\mathbf{\dot{W}}_{t} = \mathbf{W}_{t}[(\mathbf{W}_{t})^{\top} \mathbf{Q}\mathbf{W}_{t}, \mathbf{N}]$ and we take: $b_{\psi}^{\mathrm{iso}}(t,\mathbf{W}_{t})\overset{\mathrm{def}}{=}[(\mathbf{W}_{t})^{\top} \mathbf{Q}\mathbf{W}_{t}, \mathbf{N}]$, where $\psi=(\mathbf{Q},\mathbf{N})$ are learnable symmetric matrices.
It is easy to see that $b_{\psi}$ outputs skew-symmetric matrices.
\vspace{-3.5mm}
\paragraph{$\mathbf{Gated}$-$\mathbf{ODEtoODE}$s:} In this setting, inspired by \cite{chang2018reversible, anode_v2},  we simply take $b_{\psi}^{\mathrm{gated}} = \sum_{i=1}^{d} a_{i} B_{\psi}^{i}$, where $d$ stands for the number of gates, $a=(a_{1},...,a_{d})$ are learnable coefficients and $B^{i}_{\psi} \overset{\mathrm{def}}{=} f^{\psi^{i}}_{i} - (f^{\psi^{i}}_{i})^{\top}$. Here $\{f^{\psi^{i}}_{i}\}_{i=1,...,d}$ are outputs of neural networks parameterized by $\psi^{i}$s and producing unstructured matrices and $\psi=\mathrm{concat}(a,\psi^{1},...,\psi^{d})$. As above, $b_{\psi}^{\mathrm{gated}}$ outputs matrices in $\mathrm{Skew}(d)$.
%\vspace{-2.5mm}
\subsection{Learning and executing ODEtoODEs}
\label{sec:learning}
Note that most of the parameters of $\mathrm{ODEtoODE}$ from Formula \ref{main_eq} are unstructured and can be trained via standard methods. To train an initial orthogonal matrix $\mathbf{W}_{0}$, we apply tools from the theory of Lie groups and Lie algebras \cite{lee}. First, we notice that $\mathcal{O}(d)$ can be locally linearized via skew-symmetric matrices $\mathrm{Skew}(d)$ (i.e. a tangent vector space $T_{\mathbf{W}}$ to $\mathcal{O}(d)$ in $\mathbf{W} \in \mathcal{O}(d)$ is of the form: $T_{\mathbf{W}} = \mathbf{W}\cdot \mathrm{Sk}(d)$). We then compute Riemannian gradients which are projections of unstructured ones onto tangent spaces. For the inference on $\mathcal{O}(d)$, we apply exponential mapping $\Gamma(\mathbf{W}) \overset{\mathrm{def}}{=} \exp(\eta b_{\psi}(t,\mathbf{W}))$, where $\eta$ is the step size, leading to the discretization of the form: $\mathbf{W}_{t+dt}=\mathbf{W}_{t}\Gamma(\mathbf{W}_{t})$ (see: Fig. \ref{fig:scheme}).

%\vspace{-3mm}
\section{Theoretical results}
%\vspace{-2mm}
\subsection{ODEtoODEs solve gradient vanishing/explosion problem}
First we show that ODEtoODEs do not suffer from the gradient vanishing/explosion problem, i.e. the norms of loss gradients with respect to intermediate activations $\mathbf{x}_{t}$ do not grow/vanish exponentially while backpropagating through time (through \textit{infinitesimally close} layers).

\begin{lemma}[ODEtoODES for gradient stabilization]
\label{lemma:grad_vanish_explosion_lemma}
Consider a Neural ODE on time interval $[0, T]$ and given by Formula \ref{main_eq}. Let $\mathcal{L}=\mathcal{L}(\mathbf{x}_{T})$ be a  
differentiable loss function. The following holds for any $t \in [0, 1]$, where $e=2.71828...$ is Euler constant:
\begin{equation}
\frac{1}{e}\|\frac{\partial \mathcal{L}}{\partial \mathbf{x}_{T}}\|_{2} \leq \|\frac{\partial \mathcal{L}}{\partial \mathbf{x}_{t}}\|_{2} \leq e   
\|\frac{\partial \mathcal{L}}{\partial \mathbf{x}_{T}}\|_{2}.
\end{equation}
\end{lemma}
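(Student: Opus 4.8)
The plan is to reduce the statement to a Gr\"onwall-type estimate for the backward adjoint flow associated with the main ODE in Formula \ref{main_eq}. Write $\mathbf{a}_t \overset{\mathrm{def}}{=} \left(\frac{\partial \mathcal{L}}{\partial \mathbf{x}_t}\right)^{\top} \in \mathbb{R}^{d}$, so that $\|\mathbf{a}_t\|_2 = \|\frac{\partial \mathcal{L}}{\partial \mathbf{x}_t}\|_2$. Treating the matrix flow $\{\mathbf{W}_t\}$ as a fixed time-dependent input to the main flow, the standard adjoint-sensitivity calculation for Neural ODEs gives that $\mathbf{a}_t$ solves, backward in time on $[0,T]$, the linear ODE $\dot{\mathbf{a}}_t = -J_t^{\top}\mathbf{a}_t$ with terminal condition $\mathbf{a}_T$, where $J_t \overset{\mathrm{def}}{=} \frac{\partial}{\partial \mathbf{x}_t} f(\mathbf{W}_t \mathbf{x}_t)$ is the Jacobian of the layer map. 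Since $f$ is applied elementwise, $J_t = D_t \mathbf{W}_t$ with $D_t = \mathrm{diag}\!\left(f'((\mathbf{W}_t\mathbf{x}_t)_1),\dots,f'((\mathbf{W}_t\mathbf{x}_t)_d)\right)$.

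The key observation is that $J_t$ is orthogonal. By our assumption on $f$ we have $|f'|=1$ wherever $f$ is differentiable, so the diagonal entries of $D_t$ lie in $\{-1,+1\}$ and $D_t \in \mathcal{O}(d)$; and, as recalled after Formula \ref{main_eq}, $\mathbf{W}_0 \in \mathcal{O}(d)$ together with $b_\psi$ being $\mathrm{Skew}(d)$-valued forces $\mathbf{W}_t \in \mathcal{O}(d)$ for all $t$. Hence $J_t = D_t\mathbf{W}_t \in \mathcal{O}(d)$, so $J_t^{\top}$ is a linear isometry: $\|J_t^{\top}\mathbf{v}\|_2 = \|\mathbf{v}\|_2$ for every $\mathbf{v}$. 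Consequently, for almost every $t$, $\left|\frac{d}{dt}\|\mathbf{a}_t\|_2\right| \le \|\dot{\mathbf{a}}_t\|_2 = \|J_t^{\top}\mathbf{a}_t\|_2 = \|\mathbf{a}_t\|_2$. Integrating this differential inequality backward from $T$ to $t$ yields $e^{-(T-t)}\|\mathbf{a}_T\|_2 \le \|\mathbf{a}_t\|_2 \le e^{(T-t)}\|\mathbf{a}_T\|_2$; in the regime of the statement $T-t \le 1$ (the horizon is taken to be $T=1$), so both multiplicative factors lie in $[1/e, e]$, which is exactly the claimed inequality -- more generally the same argument gives the bound with $e$ replaced by $e^{T-t}$. (If $\mathbf{a}_T = 0$ the linear ODE forces $\mathbf{a}_t \equiv 0$ and the bound is trivial.)

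The one point requiring care -- and the main obstacle -- is that $f$ need not be differentiable everywhere (e.g. $f(x)=|x|$ at $0$), so the Jacobian $J_t$, and hence the adjoint ODE, are only defined for a.e. $t$. To handle this I would argue that along the trajectory the exceptional set $\{\, t : (\mathbf{W}_t\mathbf{x}_t)_i \text{ hits a non-differentiable point of } f \text{ for some } i \,\}$ has Lebesgue measure zero, so that $t \mapsto \mathbf{x}_t$ and $t \mapsto \mathbf{a}_t$ are absolutely continuous and the inequality $\left|\frac{d}{dt}\|\mathbf{a}_t\|_2\right| \le \|\mathbf{a}_t\|_2$ holds a.e., which is all the integrated Gr\"onwall bound needs. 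Everything else is the one-line remark that the layer Jacobian is orthogonal, so it contributes operator norm exactly $1$ to the backward flow and the gradient norm cannot be distorted by more than $e^{\pm 1}$ over a unit time horizon.
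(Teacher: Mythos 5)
Your proof is correct and rests on the same central observation as the paper's — that the layer Jacobian $D_t\mathbf{W}_t$ is orthogonal, hence an isometry of operator norm one — but you reach the conclusion by a genuinely different route. The paper discretizes the main flow with an Euler scheme of step $1/N$, writes the backpropagated gradient as $\bigl[\prod_{r}(\mathbf{I}_d+\tfrac{1}{N}\mathbf{G}^{N}_{r})\bigr]\frac{\partial\mathcal{L}}{\partial\mathbf{x}_T}$ with $\mathbf{G}^{N}_{r}\in\mathcal{O}(d)$, bounds the product's norm from above by expanding it into $\sum_k \binom{N-i}{k}N^{-k}\le\sum_k 1/k!\le e$, bounds it from below termwise via $\|(\mathbf{I}_d+\tfrac1N\mathbf{G})\mathbf{v}\|_2\ge(1-\tfrac1N)\|\mathbf{v}\|_2$, and passes to the limit $N\to\infty$ using $(1-\tfrac1N)^N\to e^{-1}$. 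You instead stay in continuous time: the adjoint $\mathbf{a}_t$ satisfies $\dot{\mathbf{a}}_t=-J_t^{\top}\mathbf{a}_t$ with $J_t$ orthogonal, so $|\tfrac{d}{dt}\|\mathbf{a}_t\|_2|\le\|\mathbf{a}_t\|_2$ and a two-sided Gr\"onwall integration gives $e^{-(T-t)}\le\|\mathbf{a}_t\|_2/\|\mathbf{a}_T\|_2\le e^{T-t}$. Your argument is shorter, avoids the discretization and the limit, and yields the sharper time-dependent constant $e^{T-t}$; the paper's argument has the virtue of operating directly on the Euler-discretized system that is actually implemented, so the bound holds for every finite $N$ and not only in the limit. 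Both proofs share the same unaddressed technical wrinkle about non-differentiability of $f$ along the trajectory; you at least flag it explicitly, though your claim that the exceptional set has measure zero would itself need an argument (a coordinate of $\mathbf{W}_t\mathbf{x}_t$ could in principle sit at a kink of $f$ on a set of positive measure), and the paper simply assumes it away.
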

%\vspace{-2mm}
\subsection{ODEtoODEs for training reinforcement learning policies with ES}
\label{sec:rl_theory}
Here we show strong convergence guarantees for $\mathrm{ODEtoODE}$. For the clarity of the exposition, we consider $\mathrm{ISO}$-$\mathrm{ODEtoODE}$, even though similar results can be obtained for gated-$\mathrm{ODEtoODE}$. We focus on training RL policies with ES \cite{salimans}, which is mathematically more challenging to analyze than supervised setting, since it requires applying 
$\mathrm{ODEtoODE}$ several times throughout the rollout. Analogous results can be obtained for the conceptually simpler supervised setting.

Let $\mathrm{env}: \mathbb{R}^d \times \mathbb{R}^m \to \mathbb{R}^d \times \mathbb{R}$ be an "environment" function such that it gets current state $\mathbf{s}_k \in \mathbb{R}^d$ and action encoded as a vector $\mathbf{a}_k \in \mathbb{R}^m$ as its input and outputs the next state $\mathbf{s}_{k + 1}$ and the next score value $l_{k + 1} \in \mathbb{R}$: $(\mathbf{s}_{k + 1}, l_{k + 1}) = \mathrm{env} (\mathbf{s}_k, \mathbf{a}_k)$. We treat $\mathrm{env}$ as a ``black-box'' function, which means that we evaluate its value for any given input, but we don't have access to $\mathrm{env}$'s gradients. An overall score $L$ is a sum of all per-step scores $l_k$ for the whole rollout from state $\mathbf{s}_0$ to state $\mathbf{s}_K$:
\begin{equation} \label{eq:ldef}
    L(\mathbf{s}_0, \mathbf{a}_0, \dots, \mathbf{a}_{K - 1}) = \sum_{k = 1}^K l_k, \quad \forall k \in \{ 1, \dots, K \}: (\mathbf{s}_k, l_k) = \mathrm{env} (\mathbf{s}_{k - 1}, \mathbf{a}_{k - 1}).
\end{equation}
We assume that $\mathbf{s}_0$ is deterministic and known. The goal is to train a policy function $g_\theta: \mathbb{R}^d \to \mathbb{R}^m$ which, for the current state vector $\mathbf{s} \in \mathbb{R}^d$, returns an action vector $\mathbf{a} = g_\theta (\mathbf{s})$ where $\theta$ are trained parameters. By \textit{Stiefel manifold} $\mathcal{ST} (d_1, d_2)$ we denote a nonsquare extension of orthogonal matrices: if $d_1 \geq d_2$ then $\mathcal{ST} (d_1, d_2) = \{ \mathbf{\Omega} \in \mathbb{R}^{d_1 \times d_2} \, | \, \mathbf{\Omega}^\top \mathbf{\Omega} = I \}$, otherwise $\mathcal{ST} (d_1, d_2) = \{ \mathbf{\Omega} \in \mathbb{R}^{d_1 \times d_2} \, | \, \mathbf{\Omega} \mathbf{\Omega}^\top = I \}$. We define $g_\theta$ as an $\mathrm{ODEtoODE}$ with Euler discretization given below:
%\vspace{-1.5mm}
\begin{gather}
    g_\theta (\mathbf{s}) = \mathbf{\Omega}_2 \mathbf{x}_{N}, \quad \mathbf{x}_0 = \mathbf{\Omega}_1 \mathbf{s}, \quad \forall i \in \{ 1, \dots, N \}: \mathbf{x}_i = \mathbf{x}_{i - 1} + \frac{1}{N} f (\mathbf{W}_i \mathbf{x}_{i - 1} + \mathbf{b}) \label{eq:roll1} \\
    \mathbf{W}_i = \mathbf{G}_{i - 1} \exp \biggl( \frac{1}{N} ( \mathbf{W}_{i - 1}^\top \mathbf{Q} \mathbf{W}_{i - 1} \mathbf{N} - \mathbf{N} \mathbf{W}_{i - 1}^\top \mathbf{Q} \mathbf{W}_{i - 1} ) \biggr) \in \mathcal{O} (d), \label{eq:roll2} \\
    \theta = \{\mathbf{\Omega}_1 \in \mathcal{ST}(n, d), \mathbf{\Omega}_2 \in \mathcal{ST}(m, n), \mathbf{b} \in \mathbb{R}^n, \mathbf{N} \in \mathbb{S} (n), \mathbf{Q} \in \mathbb{S} (n), \mathbf{W}_0 \in \mathcal{O} (n) \} \in \mathbb{D} \nonumber
\end{gather}
where by $\mathbb{D}$ we denote $\theta$'s domain: $\mathbb{D} = \mathcal{ST} (n, d) \times \mathcal{ST} (m, n) \times \mathbb{R}^n \times \mathbb{S} (n) \times \mathbb{S} (n) \times \mathcal{O} (n)$. Define a final objective $F: \mathbb{D} \to \mathbb{R}$ to maximize as
\begin{equation} \label{eq:esrollout}
    F (\theta) = L (\mathbf{s}_0, \mathbf{a}_0, \dots, \mathbf{a}_{K - 1}), \quad \forall k \in \{ 1, \dots, K \}: \mathbf{a}_{k - 1} = g_\theta (\mathbf{s}_{k - 1}).
%\vspace{-1mm}    
\end{equation}
For convenience, instead of $(\mathbf{s}_{out}, l) = \mathrm{env} (\mathbf{s}_{in}, \mathbf{a})$ we will write $\mathbf{s}_{out} = \mathrm{env}^{(1)} (\mathbf{s}_{in}, \mathbf{a})$ and $l = \mathrm{env}^{(2)} (\mathbf{s}_{in}, \mathbf{a})$. In our subsequent analysis we will use a not quite limiting Lipschitz-continuity assumption on $\mathrm{env}$ which intuitively means that a small perturbation of $\mathrm{env}$'s input leads to a small perturbation of its output. In addition to that, we assume that per-step loss $l_k = \mathrm{env}^{(2)} (\mathbf{s}_{k - 1}, \mathbf{a}_{k - 1})$ is uniformly bounded.
\begin{assumption} \label{as:lpsf}
There exist $M, L_1, L_2 > 0$ such that for any $\mathbf{s}', \mathbf{s}'' \in \mathbb{R}^d, \mathbf{a}', \mathbf{a}'' \in \mathbb{R}^m$ it holds:
\begin{gather*}
    | \mathrm{env}^{(2)}(\mathbf{s}', \mathbf{a}') | \leq M, \quad  \| \mathrm{env}^{(1)} (\mathbf{s}', \mathbf{a}') - \mathrm{env}^{(1)} (\mathbf{s}'', \mathbf{a}'') \|_2 \leq L_1 \delta, \\
    | \mathrm{env}^{(2)} (\mathbf{s}', \mathbf{a}') - \mathrm{env}^{(2)} (\mathbf{s}'', \mathbf{a}'') | \leq L_2 \delta, \quad \delta = \| \mathbf{s}' - \mathbf{s}'' \|_2 + \| \mathbf{a}' - \mathbf{a}'' \|_2 .
\end{gather*}
\end{assumption}

\begin{assumption} \label{as:lpssigma}
$f(\cdot)$ is Lipschitz-continuous with a Lipschitz constant $1$: $\forall x', x'' \in \mathbb{R}: | f(x') - f(x'') | \leq | x' - x'' |$. In addition to that, $f(0) = 0$.
\end{assumption}
%\vspace{-2mm}
Instead of optimizing $F (\theta)$ directly, we opt for optimization of a Gaussian-smoothed proxi $F_\sigma (\theta)$:
\begin{equation*}
    F_\sigma (\theta) = \mathbb{E}_{\epsilon \sim \mathcal{N} (0, I)} F (\theta + \sigma \epsilon)
\end{equation*}
where $\sigma > 0$. Gradient of $F_\sigma (\theta)$ has a form:
\begin{equation*}
    \nabla F_\sigma (\theta) = \frac{1}{\sigma} \mathbb{E}_{\epsilon \sim \mathcal{N} (0, I)} F (\theta + \sigma \epsilon) \epsilon,
\end{equation*}
which suggests an unbiased stochastic approximation such that it only requires to evaluate $F(\theta)$, i.e. no access to $F$'s gradient is needed, where $v$ is a chosen natural number:
\begin{equation*}
    \widetilde{\nabla} F_\sigma (\theta) = \frac{1}{\sigma v} \sum_{w = 1}^v F(\theta + \sigma \epsilon_w) \epsilon_w, \quad \epsilon_1, \dots, \epsilon_v \sim \text{i.i.d. } \mathcal{N} (0, I),
\end{equation*}

By defining a corresponding metric, $\mathbb{D}$ becomes a product of Riemannian manifolds and a Riemannian manifold itself. By using a standard Euclidean metric, we consider a Riemannian gradient of $F_\sigma (\theta)$:
\begin{gather}
    \nabla_\mathcal{R} F_\sigma (\theta) = \{ \nabla_{\mathbf{\Omega}_1} \mathbf{\Omega}_1^\top - \mathbf{\Omega}_1 \nabla_{\mathbf{\Omega}_1}^\top, \nabla_{\mathbf{\Omega}_2} \mathbf{\Omega}_2^\top - \mathbf{\Omega}_2 \nabla_{\mathbf{\Omega}_2}^\top, \nabla_\mathbf{b}, \frac{1}{2} (\nabla_\mathbf{N} + \nabla_\mathbf{N}^\top), \frac{1}{2} (\nabla_\mathbf{Q} + \nabla_\mathbf{Q}^\top), \label{eq:rg1} \\
    \nabla_{\mathbf{W}_0} \mathbf{W}_{0}^\top - \mathbf{W}_{0} \nabla_{\mathbf{W}_0}^\top  \}, \text{ where } \{ \nabla_{\mathbf{\Omega}_1}, \nabla_{\mathbf{\Omega}_2}, \nabla_\mathbf{b}, \nabla_\mathbf{N}, \nabla_\mathbf{Q}, \nabla_{\mathbf{W}_0} \} = \nabla F_\sigma (\theta). \label{eq:rg2}
\end{gather}
We use Stochastic Riemannian Gradient Descent \cite{srgd} to maximize $F(\theta)$. Stochastic Riemannian gradient estimate $\widetilde{\nabla}_\mathcal{R} F_\sigma (\theta)$ is obtained by substituting $\nabla \to \widetilde{\nabla}$ in (\ref{eq:rg1}-\ref{eq:rg2}). The following Theorem proves that SRGD is converging to a stationary point of the maximization objective $F(\theta)$ with rate $O(\tau^{-0.5 + \epsilon})$ for any $\epsilon > 0$. Moreover, the constant hidden in $O(\tau^{-0.5 + \epsilon})$ rate estimate doesn't depend on the length $N$ of the rollout.

\begin{thm} \label{th:conv}
Consider a sequence $\{ \theta^{(\tau)} = \{ \{ \mathbf{\Omega}_1^{(\tau)}, \mathbf{\Omega}_2^{(\tau)}, \mathbf{b}^{(\tau)}, \mathbf{N}^{(\tau)}, \mathbf{Q}^{(\tau)}, \mathbf{W}_{0}^{(\tau)} \} \in \mathbb{D} \}_{\tau = 0}^\infty$ where $\theta^{(0)}$ is deterministic and fixed and for each $\tau > 0$:
\begin{gather*}
    \mathbf{\Omega}_1^{(\tau)} = \exp (\alpha_\tau \widetilde{\nabla}_{\mathcal{R},\mathbf{\Omega}_1}^{(\tau)} ) \mathbf{\Omega}_1^{(\tau - 1)}, \quad \mathbf{\Omega}_2^{(\tau)} = \exp (\alpha_\tau \widetilde{\nabla}_{\mathcal{R},\mathbf{\Omega}_2}^{(\tau)} ) \mathbf{\Omega}_2^{(\tau - 1)}, \quad \mathbf{b}^{(\tau)} = \mathbf{b}^{(\tau - 1)} + \alpha_\tau \widetilde{\nabla}_{\mathcal{R},\mathbf{b}}^{(\tau)}, \\
    \mathbf{N}^{(\tau)} = \mathbf{N}^{(\tau - 1)} + \alpha_\tau \widetilde{\nabla}_{\mathcal{R},\mathbf{N}}^{(\tau)}, \quad \mathbf{Q}^{(\tau)} = \mathbf{Q}^{(\tau - 1)} + \alpha_\tau \widetilde{\nabla}_{\mathcal{R},\mathbf{Q}}^{(\tau)}, \quad \mathbf{W}_{0}^{(\tau)} = \exp (\alpha_\tau \widetilde{\nabla}_{\mathcal{R},\mathbf{W}_0}^{(\tau)}) \mathbf{W}_{0}^{(\tau - 1)}, \\
    \{ \widetilde{\nabla}_{\mathcal{R},\mathbf{\Omega}_1}^{(\tau)}, \widetilde{\nabla}_{\mathcal{R},\mathbf{\Omega}_2}^{(\tau)}, \widetilde{\nabla}_{\mathcal{R},\mathbf{b}}^{(\tau)}, \widetilde{\nabla}_{\mathcal{R},\mathbf{N}}^{(\tau)}, \widetilde{\nabla}_{\mathcal{R},\mathbf{Q}}^{(\tau)}, \widetilde{\nabla}_{\mathcal{R},\mathbf{W}_0}^{(\tau)} \} = \widetilde{\nabla}_\mathcal{R} F_\sigma (\theta^{(\tau)}), \quad \alpha_\tau = \tau^{-0.5}.
\end{gather*}
Then $\min_{0 \leq \tau' < \tau} \mathbb{E} [ \| \nabla_\mathcal{R} F_\sigma (\theta^{(\tau')}) \|_2^2 | \mathcal{F}_{\tau,D,D_b} ] \leq \mathcal{E} \cdot \tau^{-0.5 + \epsilon}$ for any $\epsilon > 0$ where $\mathcal{E}$ doesn't depend on $N$ and $D, D_b > 0$ are constants and $\mathcal{F}_{\tau,D,D_b}$ denotes a condition that for all $\tau' \leq \tau$ it holds $\| \mathbf{N}^{(\tau')} \|_2, \| \mathbf{Q}^{(\tau')} \|_2 < D, \| \mathbf{b}^{(\tau')} \|_2 < D_b$.
\end{thm}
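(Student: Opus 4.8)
The plan is to reduce the statement to a standard convergence result for stochastic Riemannian gradient descent on a compact-times-Euclidean manifold, and to establish the two nontrivial ingredients that such a result needs: (i) a uniform (in $N$) Lipschitz-smoothness bound for $F_\sigma$ with respect to the Riemannian metric on $\mathbb{D}$, restricted to the event $\mathcal{F}_{\tau,D,D_b}$; and (ii) a uniform (in $N$) bound on the second moment of the stochastic gradient estimate $\widetilde{\nabla}_\mathcal{R} F_\sigma(\theta)$. Granting these, the telescoping/martingale argument that is classical for SGD with step sizes $\alpha_\tau = \tau^{-1/2}$ (as in \cite{srgd}) yields $\min_{0\le\tau'<\tau}\mathbb{E}[\|\nabla_\mathcal{R} F_\sigma(\theta^{(\tau')})\|_2^2] = O(\tau^{-1/2+\epsilon})$; the conditioning on $\mathcal{F}_{\tau,D,D_b}$ is needed precisely because the smoothness and moment constants degrade as $\|\mathbf{N}\|,\|\mathbf{Q}\|$ grow, so on that event they are genuine constants.

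The first and main technical step is to show that the map $\theta \mapsto g_\theta(\mathbf{s})$, and hence $F(\theta)$, is Lipschitz in $\theta$ with a constant that does \emph{not} blow up as $N\to\infty$. This is where the orthogonal structure does the work. I would first argue that along the rollout \eqref{eq:roll1}--\eqref{eq:roll2} each $\mathbf{W}_i$ stays exactly on $\mathcal{O}(d)$ (the Euler update multiplies by $\exp$ of a skew-symmetric matrix, since $\mathbf{W}^\top\mathbf{Q}\mathbf{W}\mathbf{N}-\mathbf{N}\mathbf{W}^\top\mathbf{Q}\mathbf{W}$ is skew whenever $\mathbf{N},\mathbf{Q}$ are symmetric), so $\|\mathbf{W}_i\mathbf{x}_{i-1}+\mathbf{b}\|_2 \le \|\mathbf{x}_{i-1}\|_2 + \|\mathbf{b}\|_2$. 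Combined with Assumption \ref{as:lpssigma} ($f$ is $1$-Lipschitz, $f(0)=0$, so $\|f(\mathbf{y})\|\le\|\mathbf{y}\|$), the Euler step $\mathbf{x}_i = \mathbf{x}_{i-1}+\tfrac1N f(\cdot)$ gives $\|\mathbf{x}_i\|\le(1+\tfrac1N)\|\mathbf{x}_{i-1}\| + \tfrac1N\|\mathbf{b}\|$, so $\|\mathbf{x}_N\|\le e(\|\mathbf{x}_0\|+\|\mathbf{b}\|)$ uniformly in $N$ --- this is the discrete analogue of Lemma \ref{lemma:grad_vanish_explosion_lemma}. For sensitivity, I would propagate a perturbation of the parameters through the $N$ Euler steps: writing $\Delta\mathbf{x}_i$ for the first-order variation, one gets a recursion $\|\Delta\mathbf{x}_i\| \le (1+\tfrac{C}{N})\|\Delta\mathbf{x}_{i-1}\| + \tfrac1N(\text{terms linear in }\|\Delta\mathbf{W}_i\|,\|\Delta\mathbf{b}\|,\|\Delta\mathbf{\Omega}_1\|)$, and separately a recursion for $\|\Delta\mathbf{W}_i\|$ driven by $\|\Delta\mathbf{N}\|,\|\Delta\mathbf{Q}\|,\|\Delta\mathbf{W}_0\|$ with a per-step Lipschitz factor of order $1+\tfrac{C(D)}{N}$ (here the constant depends on $D$ through $\|\mathbf{Q}\|,\|\mathbf{N}\|$, which is why we condition on $\mathcal{F}_{\tau,D,D_b}$, and on $D_b$ through $\|\mathbf{b}\|$). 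Discrete Gr\"onwall then collapses both recursions to $N$-independent constants, and composing with the $\mathrm{env}$ Lipschitz bounds from Assumption \ref{as:lpsf} (unrolled over the $K$ environment steps, $K$ fixed) gives $|F(\theta')-F(\theta'')| \le C(D,D_b)\,\mathrm{dist}(\theta',\theta'')$. Boundedness of $F$ itself follows immediately from $|\mathrm{env}^{(2)}|\le M$, so $|F|\le KM$.

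Next I would transfer these bounds to $F_\sigma$ and its gradient. Since $F_\sigma = \mathbb{E}_\epsilon F(\theta+\sigma\epsilon)$ is an average of translates of a bounded Lipschitz function, $F_\sigma$ inherits the same Lipschitz constant and bound; moreover $F_\sigma$ is smooth, and from $\nabla F_\sigma(\theta) = \tfrac1\sigma\mathbb{E}_\epsilon F(\theta+\sigma\epsilon)\epsilon$ together with boundedness of $F$ one obtains $\|\nabla F_\sigma\|\le \tfrac{\sqrt{n}\,KM}{\sigma}$ and, by a second application, a Lipschitz constant for $\nabla F_\sigma$ of order $\tfrac{C(D,D_b)}{\sigma}$ (the Gaussian-smoothing gradient-Lipschitz estimate), all independent of $N$. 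For the stochastic estimate, $\mathbb{E}\|\widetilde{\nabla}F_\sigma(\theta)\|_2^2 \le \tfrac{1}{\sigma^2 v}\mathbb{E}[F(\theta+\sigma\epsilon)^2\|\epsilon\|^2] + \|\nabla F_\sigma\|^2 \le \tfrac{C n (KM)^2}{\sigma^2 v}$, again $N$-independent; projecting onto the tangent space (the operations in \eqref{eq:rg1}--\eqref{eq:rg2}) is $1$-Lipschitz in the relevant norm and only helps. The Riemannian retractions used in the update ($\exp(\alpha_\tau\widetilde\nabla_\mathcal{R})\cdot$ on the $\mathcal{O}$/$\mathcal{ST}$ factors, ordinary addition on the Euclidean factors) are second-order retractions on compact manifolds, so the standard descent inequality $F_\sigma(\theta^{(\tau)}) \ge F_\sigma(\theta^{(\tau-1)}) + \alpha_\tau\langle\nabla_\mathcal{R}F_\sigma,\widetilde\nabla_\mathcal{R}F_\sigma\rangle - \tfrac{L_\sigma}{2}\alpha_\tau^2\|\widetilde\nabla_\mathcal{R}F_\sigma\|^2$ holds on $\mathcal{F}_{\tau,D,D_b}$ with constants from the previous paragraph. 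Taking conditional expectations, using unbiasedness of $\widetilde\nabla$ given the past, summing $\tau'=1,\dots,\tau$, using $|F_\sigma|\le KM$ to bound the telescoped left side and $\sum \alpha_{\tau'}^2 = O(\log\tau)$, $\sum\alpha_{\tau'} = \Theta(\tau^{1/2})$, and dividing, yields $\min_{\tau'<\tau}\mathbb{E}[\|\nabla_\mathcal{R}F_\sigma(\theta^{(\tau')})\|^2\mid\mathcal{F}_{\tau,D,D_b}] = O(\tau^{-1/2}\log\tau) = O(\tau^{-1/2+\epsilon})$, with $\mathcal{E}$ depending only on $K,M,\sigma,n,v,D,D_b,L_1,L_2$ and not on $N$.

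The main obstacle is Step 2: getting the sensitivity recursions right so that the per-step amplification factor is $1+O(1/N)$ rather than something like $1+O(1)$ --- this is exactly the point at which one must use that $\mathbf{W}_i$ never leaves $\mathcal{O}(d)$ (so it contributes no multiplicative growth) and that $f$ is nonexpansive, and one must carefully separate the "slow" variable $\mathbf{W}$ (whose own perturbation also only grows by $1+O(1/N)$ per step because the double-bracket generator is Lipschitz on bounded sets and the update is again conjugation by an orthogonal $\exp$) from the "fast" variable $\mathbf{x}$. Everything else is bookkeeping: propagating constants through the fixed number $K$ of environment calls, the routine Gaussian-smoothing estimates, and the textbook SRGD telescoping argument.
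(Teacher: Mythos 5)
Your proposal follows essentially the same route as the paper: the paper's proof is organized into exactly the lemmas you describe — an $N$-independent Lipschitz bound for $\theta \mapsto g_\theta(\mathbf{s})$ obtained from the per-step $1+O(1/N)$ recursions for $\mathbf{x}_i$ and $\mathbf{W}_i$ (using orthogonality of $\mathbf{W}_i$, nonexpansiveness of $f$, and the bound $\|\exp(\mathbf{A}')-\exp(\mathbf{A}'')\|_2 \le \|\mathbf{A}'-\mathbf{A}''\|_2$ for skew-symmetric arguments), composition with the $\mathrm{env}$ Lipschitz bounds over the $K$ rollout steps, the Gaussian-smoothing estimates for the smoothness of $F_\sigma$ and the second moment of $\widetilde{\nabla}F_\sigma$, and the standard SRGD descent/telescoping argument with $\alpha_\tau=\tau^{-1/2}$. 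The key idea you identify — that the orthogonal constraint is what keeps the discrete Gr\"onwall constants independent of $N$ — is precisely the mechanism the paper exploits.
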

%\newpage
%\vspace{-1cm}
\section{Experiments}
\label{sec:experiments}
%\vspace{-3mm}
We run two sets of experiments comparing $\mathrm{ODEtoODE}$ with several other methods in the supervised setting and to train RL policies with ES. To the best of our knowledge, we are the first to propose Neural ODE architectures for RL-policies and explore how the compactification of the number of parameters they provide can be leveraged by ES methods that benefit from compact models \cite{structured}.

\subsection{Neural ODE policies with $\mathrm{ODEtoODE}$ architectures}
\label{sec:rl}
\subsubsection{Basic setup}
In all Neural ODE methods we were integrating on time interval $[0,T]$ for $T=1$ and applied discretization with integration step size $\eta=0.04$
(in our ODEtoODE we used that $\eta$ for both: the main flow and the orthogonal flow on $\mathcal{O}(d)$). The dimensionality of the embedding of the input state $s$ was chosen to be $h=64$ for $\mathrm{OpenAI}$ $\mathrm{Gym}$ $\mathrm{Humanoid}$ (for all methods but $\mathrm{HyperNet}$, where we chose $h=16$, see: discussion below) and $h=16$ for all other tasks. 

Neural ODE policies were obtained by a linear projection of the input state into embedded space and Neural ODE flow in that space, followed by another linear projection into action-space.
In addition to learning the parameters of the Neural ODEs, we also trained their initial matrices and those linear projections. Purely linear policies were proven to get good rewards on $\mathrm{OpenAI}$ $\mathrm{Gym}$ environments \cite{ARS}, yet they lead to inefficient policies in practice \cite{stoch_1}; thus even those environments benefit from deep nonlinear policies. We enriched our studies with additional environments from $\mathrm{Deep}$ $\mathrm{Mind}$ $\mathrm{Control}$ $\mathrm{Suite}$.
We used standard deviation $\mathrm{stddev}=0.1$ of the Gaussian noise defining ES perturbations, ES-gradient step size $\delta=0.01$ and function $\sigma(x) = |x|$ as a nonlinear mapping. In all experiments we used $k=200$ perturbations per iteration \cite{structured}. 
\vspace{-0.3cm}
\paragraph{Number of policy parameters:} To avoid favoring ODEtoODEs, the other architectures were scaled in such a way that they has similar (but not smaller) number of parameters. The ablation studies were conducted for them across different sizes and those providing best (in terms of the final reward) mean curves were chosen. No ablation studies were run on ODEtoODEs. 
%\vspace{-2mm}
\subsubsection{Tested methods}
%\vspace{-2mm}
We compared the following algorithms including standard deep neural networks and Neural ODEs:
\vspace{-3mm}
\paragraph{ODEtoODE:} Matrices corresponding to linear projections were constrained to be taken from $\mathrm{Stiefel}$ $\mathrm{manifold}$ $\mathcal{ST}(d)$ \cite{stoch_1} (a generalization of the orthogonal group $\mathcal{O}(d)$). Evolution strategies (ES) were applied as follows to optimize entire policy. Gradients of Gaussian smoothings of the function: $F:\mathbb{R}^{m} \rightarrow \mathbb{R}$ mapping vectorized policy (we denote as $m$ the number of its parameters) to obtained reward were computed via standard Monte Carlo procedure \cite{structured}. 

For the part of the gradient vector corresponding to linear projection matrices and initial matrix of the flow, the corresponding Riemannian gradients were computed to make sure that their updates keep them on the respective manifolds \cite{stoch_1}. The Riemannian gradients were then used with the exact exponential mapping from $\mathrm{Skew}(d)$ to $\mathcal{O}(d)$. For unconstrained parameters defining orthogonal flow, standard ES-update procedure was used \cite{salimans}. We applied \textbf{ISO-ODEtoODE} version of the method (see: Sec. \ref{sec:odetoode_types}).

\vspace{-3mm}
\paragraph{Deep(Res)Net:} In this case we tested unstructured deep feedforward fully connected (ResNet) neural network policy with $t=25$ hidden layers. 

\vspace{-3mm}
\paragraph{BaseODE:} Standard Neural ODE $\frac{d\mathbf{x}(t)}{dt} = f(\mathbf{x}_{t})$, where $f$ was chosen to be a feedforward fully connected network with with two hidden layers of size $h$.
\newpage 
\vspace{-0.6cm}
\begin{figure}[H]
    \begin{minipage}{1.0\textwidth}
    \subfigure{\includegraphics[width=.49\linewidth]{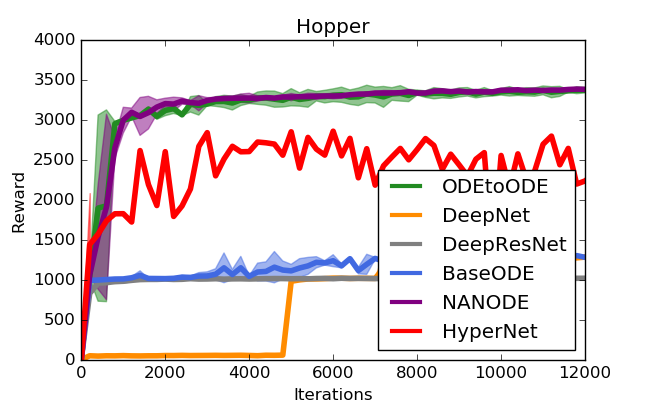}}
%    \vspace{-5mm}
    \subfigure{\includegraphics[width=.49\linewidth]{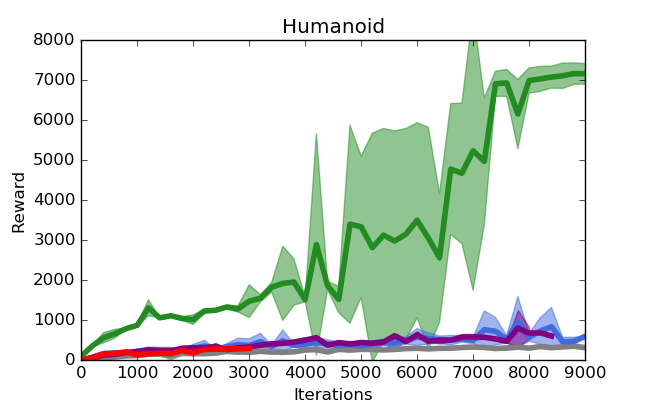}}  
\end{minipage}
\begin{minipage}{1.0\textwidth}
    \subfigure{\includegraphics[width=.49\linewidth]{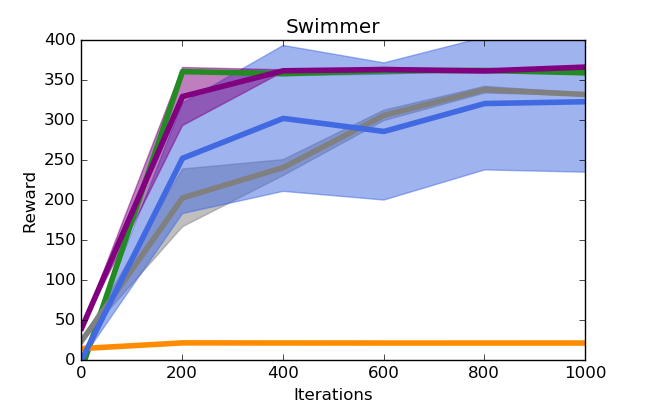}}
%    \vspace{-5mm}
    \subfigure{\includegraphics[width=.49\linewidth]{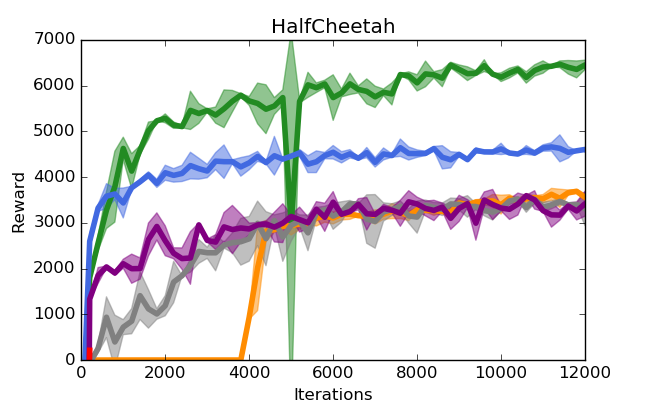}}  
\end{minipage}
\begin{minipage}{1.0\textwidth}
    \subfigure{\includegraphics[width=.49\linewidth]{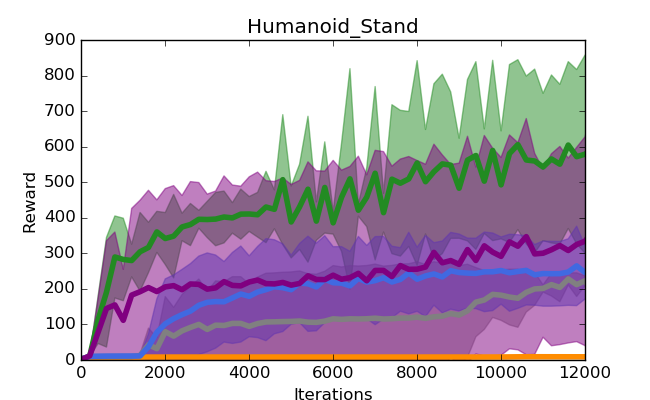}}
%    \vspace{-5mm}
    \subfigure{\includegraphics[width=.49\linewidth]{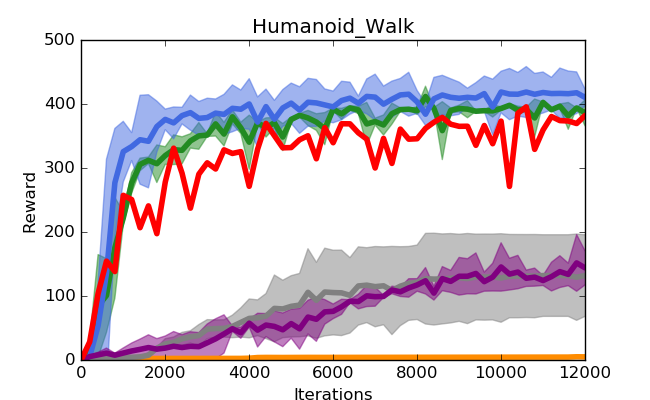}}  
\end{minipage}
\begin{minipage}{1.0\textwidth}
    \subfigure{\includegraphics[width=.49\linewidth]{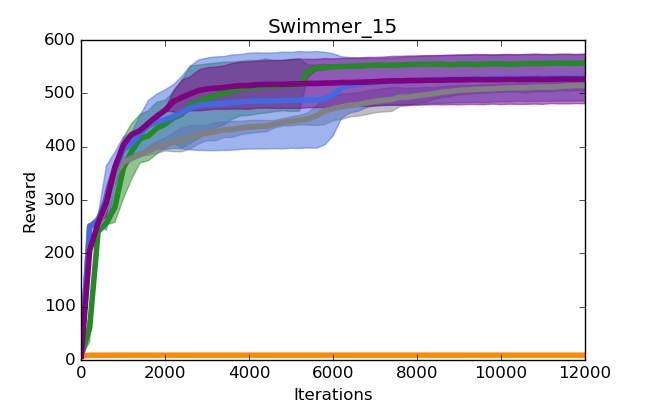}}
%    \vspace{-3mm}
    \subfigure{\includegraphics[width=.49\linewidth]{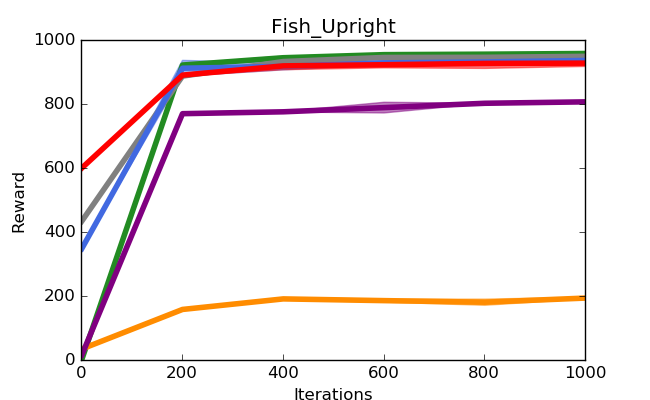}}  
\end{minipage}
\caption{\small{Comparison of different algorithms: $\mathrm{ODEtoODE}$, $\mathrm{DeepNet}$, $\mathrm{DeepResNet}$, $\mathrm{BaseODE}$, $\mathrm{NANODE}$ and $\mathrm{HyperNet}$ on four $\mathrm{OpenAI}$ $\mathrm{Gym}$ and four $\mathrm{DeepMind}$ $\mathrm{Control}$ $\mathrm{Suite}$ tasks. $\mathrm{HyperNet}$ on Swimmer, Swimmer\_15 and Humanoid\_Stand as well as $\mathrm{DeepNet}$ on Humanoid were not learning at all so corresponding curves were excluded. For $\mathrm{HalfCheetah}$ and $\mathrm{HyperNet}$, there was initially a learning signal (red spike at the plot near the origin) but then curve flattened at $0$.  Each plot shows $\mathrm{mean}$ +- $\mathrm{stdev}$ across $s=10$ seeds. $\mathrm{HyperNet}$ was also much slower than all other methods because of unstructured hypernetwork computations. $\mathrm{Humanoid}$ corresponds to $\mathrm{OpenAI}$ $\mathrm{Gym}$ environment and two its other versions on the plots to $\mathrm{DeepMind}$ $\mathrm{Control}$ $\mathrm{Suite}$ environment (with two different tasks).}}
\label{figure:es}
\end{figure}

\vspace{-9mm}
\paragraph{NANODE:} This leverages recently introduced class of Non-Autonomous Neural ODEs (NANODEs) \cite{nanode} that were showed to substantially outperform regular Neural ODEs in supervised training \cite{nanode}. NANODEs rely on flows of the form: $\frac{d\mathbf{x}(t)}{dt} = \sigma(\mathbf{W}_{t}\mathbf{x}_{t})$. Entries of weight matrices $\mathbf{W}_{t}$ are values of $d$-degree trigonometric polynomials \cite{nanode} with learnable coefficients. In all experiments we used $d=5$. We observed that was an optimal choice and larger values of $d$, even though improving model capacity, hurt training when number of perturbations was fixed (as mentioned above, in all experiments we used $k=200$).
\vspace{-3mm}
\paragraph{HyperNet:} For that method \cite{ha_1}, matrix $\mathbf{W}_{t}$ was obtained as a result of the entangled neural ODE in time $t$ after its output de-vectorization (to be more precise: the output of size $256$ was reshaped into a matrix from $\mathbb{R}^{16 \times 16}$). We encoded the dynamics of that entangled Neural ODE by a neural network $f$ with two hidden layers of size $s=16$ each. Note that even relatively small hypernetworks are characterized by large number of parameters which becomes a problem while training ES policies. We thus used $h=16$ for all tasks while running HyperNet algorithm. We did not put any structural assumptions on matrices $\mathbf{W}_{t}$, in particular they were not constrained to belong to $\mathcal{O}(d)$.
%\vspace{-3mm}
\subsubsection{Discussion of the results}
The results are presented in Fig. \ref{figure:es}. Our ODEtoODE is solely the best performing algorithm on four out of eight tasks: Humanoid, HalfCheetah, Humanoid\_Stand and Swimmer\_15 and is one of the two best performing on the remaining four.  It is clearly the most consistent algorithm across the board and the only one that learns good policies for Humanoid. Each of the remaining methods fails on at least one of the tasks, some such as: $\mathrm{HyperNet}$, $\mathrm{DeepNet}$ and $\mathrm{DepResNet}$ on more.
Exponential mapping (Sec. \ref{sec:learning}) computations for ODEtoODEs with hidden representations $h \leq 64$ took negligible time (we used well-optimized $\mathrm{scipy.linalg.expm}$ function). Thus all the algorithms but 
$\mathrm{HyperNet}$ (with expensive hypernetwork computations) had similar running time.

\subsection{Supervised learning with $\mathrm{ODEtoODE}$ architectures}
\label{sec:supervised}
We also show that $\mathrm{ODEtoODE}$ can be effectively applied in supervised learning by comparing it with multiple baselines on various image datasets.
%\vspace{-3mm}
\subsubsection{Basic setup}
All our supervised learning experiments use the $\mathrm{Corrupted}$ $\mathrm{MNIST}$ \cite{mu2019mnist} ($11$ different variants) dataset.
For all models in Table 1, we did not use dropout, applied hidden width $w=128$, and trained for $100$ epochs. For all models in Table 2, we used dropout with $r=0.1$ rate, hidden width $w=256$, and trained for $100$ epochs. For $\mathrm{ODEtoODE}$ variants, we used a discretization of $\eta=0.01$. 
%\vspace{-3mm}
\subsubsection{Tested methods}
We compared our $\mathrm{ODEtoODE}$ approach with several strong baselines: feedforward fully connected neural networks ($\mathrm{Dense}$), Neural ODEs inspired by \cite{chen2018neural} ($\mathrm{NODE}$), Non-Autonomous Neural ODEs \cite{nanode} ($\mathrm{NANODE}$) and hypernets \cite{anode_v2} ($\mathrm{HyperNet}$). 
For $\mathrm{NANODE}$, we vary the degree of the trigonometric polynomials. 
For $\mathrm{HyperNet}$, we used its gated version and compared against \textbf{gated-ODEtoODE} (see: Sec. \ref{sec:odetoode_types}). In this experiment, we focus on fully-connected architecture variants of all models. 
As discussed in prior work, orthogonality takes on a different form in the convolutional case \cite{wang2019orthogonal}, so we reserve discussion of this framing for future work.
%\vspace{-3mm}
\subsubsection{Discussion of the results}
\newcommand{\blue}[1]{\textcolor[rgb]{0.0,0.0,0.95}{#1}}
\newcommand{\red}[1]{\textcolor{red}{#1}}

The results are presented in Table \ref{table1} and Table \ref{table2} below. Our $\mathrm{ODEtoODE}$ outperforms other model variants on 11 out of the 15 tasks in Table \ref{table1}. On this particularly task, we find that $\mathrm{ODEtoODE}$-1, whereby we apply a constant perturbation to the hidden units $\theta{(0)}$ works best. We highlighted the best results for each task in \textbf{\blue{bolded blue}}.

% Note, for all datase ts except for Scale, we note that Dense-1 outperforms Dense-10, suggesting that higher capacity inhibits performance and there is an overfitting dynamic at play. Similarly, for all datasets, we record the score at epoch 100; however, in several contexts performance peaks early in training and decays overtime. Perhaps recording the max score would be most appropriate. Recording this as such would also raise the scores of ODEtoODE, but may change the hierarchy. It doesn't appear to change the hierarchy much at first glance, but I am separately preparing those variants of our tables as well, in case we decide this is more appropriate

\begin{table}[th!]
\centering\small
\resizebox{\textwidth}{!}{%
\begin{tabular}{@{}>{\bfseries}l*{10}{r}@{}}

\toprule[1pt]
\multicolumn{10}{@{}c@{}}{}\\
\midrule[0.75pt]

Models & \textbf{Dense-1} & \textbf{Dense-10} & \textbf{NODE}& \textbf{NANODE-1} & \textbf{NANODE-10} & \textbf{HyperNet-1} & \textbf{HyperNet-10} & \textbf{ODEtoODE-1} & \textbf{ODEtoODE-10} & \\
\midrule[0.55pt]

% ZigZag & 73.95 & 68.44 & 70.13 & 74.24 & 74.71 & 75.11 & 74.8 & 78.68 & \textbf{79.48} &   \\
% Canny Edges & \textbf{61.74} & 41.36 & 52.94 & 51.39 &	49.78 &	49.29 &	49.96 &	60.77 &	59.2 &	 \\
Dotted Lines & 92.99 & 88.22	& 91.54	& 92.42	& 92.74	& 91.88 & 91.9 & \blue{\textbf{95.42}} & 95.22 &  \\
Spatter & 93.54 & 89.52	& 92.49	& 93.13	& 93.15	& 93.19	& 93.28	& \blue{\textbf{94.9}}	& \blue{\textbf{94.9}}	& \\
Stripe & 30.55 & 20.57 & 36.76	& 16.4	& 21.37	& 19.71	& 18.69	& \blue{\textbf{44.51}} & 44.37	&  \\
Translate & 25.8 & 24.82 & 27.09	& 28.97	& 29.31	& \blue{\textbf{29.42}} & 29.3	& 26.82	& 26.63 & \\
Rotate & 82.8 & 80.38	& 82.76	& 83.19	& 83.65	& 83.5	& 83.56	& 83.9 & \blue{\textbf{84.1}} &  \\
Scale & 58.68 & 62.73	& 62.05	& 66.43	& 66.63	& 67.84	& \blue{\textbf{68.11}} & 66.68 & 66.76	&  \\
Shear & 91.73 & 89.52	& 91.82	& 92.18	& 93.11	& 92.33	& 92.48	& \blue{\textbf{93.37}} & 92.93	&  \\
Motion Blur & 78.16 & 67.25	& 75.18	& 76.53	& \blue{\textbf{79.22}}	& 78.82	& 78.33	& 78.63	& 77.58	&  \\
Glass Blur & 91.62 & 84.89	& 87.94	& 90.18	& 91.07	& 91.3	& 91.17	& \blue{\textbf{93.91}} & 93.29	&  \\
% Impulse Noise  & 76.93 & 61.42 & 71.18 & 66.68 & 63.34 & 65.75 & 65.52 & \textbf{85.71} & 84.01 &  \\
Shot Noise & 96.16 & 91.63 & 94.24 & 94.97 & 94.73 & 94.76 & 94.81 & \blue{\textbf{96.91}} & 96.71 &  \\
Identity & 97.55 & 95.73	& 97.61	& 97.65	& 97.69	& 97.72	& 97.54	& \blue{\textbf{97.94}} & 97.88	&  \\
% CIFAR10 & 52.83	& 50.53	& 51.47	& 53.24	& \blue{\textbf{54.27}} & 51.85	& 52.6	& 52.99	& 53.07	&  \\

\bottomrule[2pt]
\addlinespace[3pt]
\end{tabular}}
\caption{\small Test accuracy comparison of different methods. Postfix terms refer to hidden depth for Dense, trigonometric polynomial degree for NANODE, and number of gates for HyperNet and ODEtoODE.}\label{table1}
\end{table}

\vspace{-6mm}
% probably will put this in the appendix

\begin{table}[th!]
\centering\small
\resizebox{\textwidth}{!}{%
\begin{tabular}{@{}>{\bfseries}l*{14}{r}@{}}

\toprule[1pt]
\multicolumn{14}{@{}c@{}}{}\\
\midrule[0.75pt]

Models & \textbf{Dense-1} & \textbf{Dense-2} & \textbf{Dense-4} & \textbf{NODE} & \textbf{NANODE-1} & \textbf{NANODE-2} & \textbf{NANODE-4} & \textbf{HyperNet-1} & \textbf{HyperNet-2} & \textbf{HyperNet-4} & \textbf{ODEtoODE-1} & \textbf{ODEtoODE-2} & \textbf{ODEtoODE-4} & \\
\midrule[1.0pt]

% ZigZag & 74.91 & 74.85 & 74.58 &	73.64 &	72.25 &	72.72 &	73.12 &	74.96	& 74.85	& 75.13 &	\textbf{79.98} &	79.34 &	79.42	  \\
% Canny Edges & 60.98 &	56.03 &	51.17 &	50.93 &	51.21 &	50.59 &	49.66	& 53.81 &	53.54 &	53.48 &	60.73 &	60.75 &	\textbf{61.31} \\
Dotted Line	& 94.47 &	93.21 &	91.88 &	92.9 &	92 &	92.02 &	92.02 &	92.35 &	92.91 &	92.57 &	95.64 &	\blue{\textbf{95.66}} &	95.6  \\
Spatter	& 94.52	& 93.59	& 93.63 &	93.32 &	92.81 &	92.82 &	92.84 &	94.09	& 93.94	& 93.73	& 95.28	& \blue{\textbf{95.47}}	& 95.29	 \\
Stripe & 29.69	& 32.73	 & 31.49 &	36.08 &	27.32 &	23.56 &	24.66 &	30.86	& 31.12 &	29.1	& \blue{\textbf{36.25}}	& 28.21	& 31.91  \\
Translate	& 25.85	& 28.35	& 27.27	& 29.13	& 29.11 &	29.24 &	28.7	& 29.85 &	29.68 &	\blue{\textbf{29.87}} &	25.61 &	26.1 &	26.42  \\
Rotate & 83.62	& 83.73 & 83.88 &	83.09 &	82.5 &	82.77 & 83.03 & 83.96	& 84.04	& 84.13 &	\blue{\textbf{85.1}} &	85.03 &	84.72  \\
Scale & 61.41	& 65.07 & 63.72 &	63.62	& 65.49	& 65.33 &	64.03	& 69.51 &	68.77	& \blue{\textbf{69.8}}	& 67.97	& 66.95	& 67.09  \\
Shear & 92.25 &	92.76 &	92.55 &	92.27 &	92.08 &	92.18 &	92.3 &	93.35	& 92.84	& 93.04	& 93.15 &	93.06 &	\blue{\textbf{93.38}}  \\
Motion Blur	& 76.47 & 73.89 &	74.95 &	76.3 &	75.24	& 75.88	& 76.22	& 80.67 &	81.26 &	\blue{\textbf{81.36}} &	78.92 &	79.11 &	79.02  \\
Glass Blur & 92.5 &	89.65 &	90.29 &	89.47 &	89.5 &	89.5 &	89.8 &	93.07 &	92.67 &	92.69 &	\blue{\textbf{94.46}} &	94.19 &	94.3  \\
% Impulse Noise & 80.98 & 75.03 &	72.9 &	70.63 &	68.04 &	67.72 &	67.5	& 74.63 &	74.53 &	73.31 &	\textbf{84.51} &	84.21 &	83.8  \\
Shot Noise & 96.41 & 95.78 &	95.22 &	95.09 &	94.18  &	94.12 &	93.85 &	95.36 &	96.88 &	96.77 &	\blue{\textbf{96.93}} &	96.88 &	96.77  \\
Identity & 97.7 & 97.75 &	97.71 &	97.64 &	97.6 &	97.5 &	97.52 &	97.7	& 97.82	& 97.79 &	98.03 &	\blue{\textbf{98.12}} &	98.11  \\
% CIFAR10 & 52.83 & 53.39 &	52.38 &	51.24 &	53.04 &	52.17 &	52.57 &	54.44	& 53.66	& 54.68 &	53.58 &	\blue{\textbf{55.02}} &	53.97  \\
\bottomrule[2pt]
\addlinespace[3pt]
\end{tabular}}
\caption{\small Additional Sweep for the setting as in Table \ref{table1}. This time all models also incorporate dropout with rate r = 0.1 and width w = 256. As above in Table \ref{table1}, $\mathrm{ODEtoODE}$ variants achieve the highest performance.}\label{table2}
\end{table}

%\vspace{-6mm}
% \end{document}
\newpage
%\vspace{-9.5mm}
\section{Conclusion}
\label{sec:conclusion}
%\vspace{-3mm}
In this paper, we introduced nested Neural ODE systems, where the parameter-flow evolves on the orthogonal group $\mathcal{O}(d)$. Constraining this matrix-flow to develop on the compact manifold provides us with an architecture that can be efficiently trained without exploding/vanishing gradients, as we showed theoretically and demonstrated empirically by presenting gains on various downstream tasks. We are the first to demonstrate that algorithms training RL policies and relying on compact models' representations can significantly benefit from such hierarchical systems.
%\subsubsection*{Acknowledgements}

%\clearpage
%\newpage 
\section{Broader impact}
\label{broader_impact}
We do believe our contributions have potential broader impact that we briefly discuss below:
\vspace{-3mm}
\paragraph{Reinforcement Learning with Neural ODEs:} To the best of our knowledge, we are the first to propose to apply nested Neural ODEs in Reinforcement Learning, in particular to train Neural ODE policies. More compact architectures encoding deep neural network systems is an especially compelling feature in policy training algorithms, in particular while combined with ES methods admitting embarrassingly simple and efficient parallelization, yet suffering from high sampling complexity that increases with the number of policy parameters. Our work shows that RL training of such systems can be conducted efficiently provided that evolution of the parameters of the system is highly structured and takes place on compact matrix manifolds. Further extensions include applying ODEtoODEs in model-based reinforcement learning \cite{mbrl} to encode learnable dynamics of the system and in robotics. Machine learning for robotics is increasingly growing as a field and has potential of revolutionizing technology in the unprecedented way.

\vspace{-3mm}
\paragraph{Learnable Isospectral Flows:} We demonstrated that ISO-ODEtoODEs can be successfully applied to learn reinforcement learning policies. Those rely on the isospectral flows that in the past were demonstrated to be capable of solving combinatorial problems ranging from sorting to (graph) matching \cite{bro, zavlanos}. As emphasized before, such flows are however fixed and not trainable whereas we learn ours. That suggests that isospectral flows can be potentially learned to solve combinatorially-flavored machine learning problems or even integrated with non-combinatorial blocks in larger ML computational pipelines. The benefits of such an approach lie in the fact that we can efficiently backpropagate through these continuous systems and is related to recent research on differentiable sorting \cite{cuturi, blondel, grover}.
\bibliographystyle{abbrv}
\bibliography{references}
\newpage
\onecolumn
\section*{APPENDIX: An Ode to an ODE}

\section{Proof of Lemma \ref{lemma:grad_vanish_explosion_lemma}}
\begin{proof}
Consider the following Euler-based discretization of the main (non-matrix) flow of the ODEtoODE:
\begin{equation}
\mathbf{x}_{\frac{i+1}{N}} = \mathbf{x}_{\frac{i}{N}} + \frac{1}{N} f \left(\mathbf{W}^{\theta} \left(\frac{i}{N} \right)\mathbf{x}_{\frac{i}{N}} \right),    
\end{equation}

where $i=0,1,.,,,N-1$ and $N \in \mathbb{N}_{+}$ is fixed (defines granularity of discretization). 
Denote: $\mathbf{a}^{N}_{i}=\mathbf{x}_{\frac{i}{N}}$, $\mathbf{c}^{N}_{i}=\mathbf{b}_{\frac{i}{N}}$, $\mathbf{V}^{N}_{i}=\mathbf{W}^{\theta}(\frac{i}{N})$ and
$\mathbf{z}^{N}_{i+1} = \mathbf{V}^{N,\theta}_{i+1}\mathbf{a}^{N}_{i}+\mathbf{c}^{N}_{i+1}$.
We obtain the following discrete dynamical system:
\begin{equation}
\label{recurs}
\mathbf{a}^{N}_{i+1} = \mathbf{a}^{N}_{i} + \frac{1}{N}f \left(\mathbf{z}^{N}_{i+1} \right) ,   
\end{equation}
for $i=0,1,...,N-1$.

Let $\mathcal{L} = \mathcal{L}(\mathbf{x}_{1})=\mathcal{L}(\mathbf{a}^{N}_{N})$ by the loss function.
Our first goal is to compute $\frac{\partial \mathcal{L}}{\partial \mathbf{a}^{N}_{i}}$ for $i=0,...,N-1$.

Using Equation \ref{recurs}, we get:
\begin{align}
\begin{split}
\frac{\partial \mathcal{L}}{\partial \mathbf{a}^{N}_{i}} = 
\frac{\partial \mathbf{a}^{N}_{i+1}}{\partial \mathbf{a}^{N}_{i}}
\frac{\partial \mathcal{L}}{\partial \mathbf{a}^{N}_{i+1}}
=
(\mathbf{I}_{d}+\frac{1}{N}\mathrm{diag}(f^{\prime}(\mathbf{z}^{N}_{i+1}))\frac{\partial \mathbf{z}^{N}_{i+1}}{\partial \mathbf{a}^{N}_{i}}) \frac{\partial \mathcal{L}}{\partial \mathbf{a}^{N}_{i+1}} = \\
(\mathbf{I}_{d}+\frac{1}{N}\mathrm{diag}(f^{\prime}(\mathbf{z}^{N}_{i+1})) \mathbf{V}^{N,\theta}_{i+1}) \frac{\partial \mathcal{L}}{\partial \mathbf{a}^{N}_{i+1}}
\end{split}
\end{align}

Therefore we conclude that:
\begin{equation}
\frac{\partial \mathcal{L}}{\partial \mathbf{a}^{N}_{i}} = \biggl[
\prod_{r=i+1}^{N} 
(\mathbf{I}_{d}+\frac{1}{N}\mathrm{diag}(f^{\prime}(\mathbf{z}^{N}_{r})) \mathbf{V}^{N,\theta}_{r}) \biggr]
\frac{\partial \mathcal{L}}{\partial \mathbf{a}^{N}_{N}}
\end{equation}

Note that for function $f$ such that $|f^{\prime}(x)|=1$ in its differentiable points we have: 
$\mathrm{diag}(f^{\prime}(\mathbf{z}^{N}_{r}))$ is a diagonal matrix with nonzero entries taken from $\{-1, +1\}$, in particular $D \in \mathcal{O}(d)$, where $\mathcal{O}(d)$ stands for the \textit{orthogonal group}.

Define: $\mathbf{G}^{N}_{r} = \mathrm{diag}(f^{\prime}(\mathbf{z}^{N}_{l}))\mathbf{V}^{N,\theta}_{r}$.
Thus we have:
\begin{equation}
\frac{\partial \mathcal{L}}{\partial \mathbf{a}^{N}_{i}} = \biggl[
\prod_{r=i+1}^{N} 
(\mathbf{I}_{d}+\frac{1}{N}\mathbf{G}^{N}_{r}) \biggr]
\frac{\partial \mathcal{L}}{\partial \mathbf{a}^{N}_{N}}
\end{equation}

Note that the following is true:
\begin{align}
\begin{split}
\prod_{r=i+1}^{N} 
(\mathbf{I}_{d}+\frac{1}{N}\mathbf{G}^{N}_{r}) = 
\sum_{\{r_{1},...,r_{k}\} \subseteq \{i+1,...,N\}} \frac{1}{N^{k}}  \mathbf{G}^{N}_{r_{1}} \cdot ... \cdot \mathbf{G}^{N}_{r_{k}}
\end{split}
\end{align}

Therefore we can conclude that:
\begin{align}
\begin{split}
\|\prod_{r=i+1}^{N} 
(\mathbf{I}_{d}+\frac{1}{N}\mathbf{G}^{N}_{r})\|_{2} \leq
\sum_{\{r_{1},...,r_{k}\} \subseteq \{i+1,...,N\}} \frac{1}{N^{k}}  \|\mathbf{G}^{N}_{r_{1}} \cdot ... \cdot \mathbf{G}^{N}_{r_{k}}\|_{2} \\  = \sum_{k=0}^{N-i} \frac{1}{N^{k}}{N-i \choose k} = \sum_{k=0}^{N-i}\frac{(N-i-k+1)\cdot...\cdot (N-i)}{N^{k}k!} \leq \sum_{k=0}^{N-i} \frac{1}{k!} \leq e,
\end{split}    
\end{align}
where we used the fact that $\mathcal{O}(d)$ is a group under matrix-multiplication and $\|\mathbf{G}\|_{2} = 1$ for every $\mathbf{G} \in \mathcal{O}(d)$.
That proves inequality: $\|\frac{\partial \mathcal{L}}{\partial \mathbf{a}^{N}_{i}}\|_{2} \leq e   
\|\frac{\partial \mathcal{L}}{\partial \mathbf{a}^{N}_{N}}\|_{2}$. 

To prove the other inequality: $\|\frac{\partial \mathcal{L}}{\partial \mathbf{a}^{N}_{i}}\|_{2} \geq (\frac{1}{e}-\epsilon)   
\|\frac{\partial \mathcal{L}}{\partial \mathbf{a}^{N}_{N}}\|_{2}$, for large enough $N \geq N(\epsilon)$, it suffices to observe that if $\mathbf{G} \in \mathcal{O}(d)$, then for any $\mathbf{v} \in \mathbb{R}^{d}$ we have (by triangle inequality):
\begin{equation}
\|(\mathbf{I}_{d} + \frac{1}{N}\mathbf{G})\mathbf{v}\|_{2} \geq    
\|\mathbf{v}\|_{2} - \|\frac{1}{N}\mathbf{Gv}\|_{2}=(1-\frac{1}{N})\|\mathbf{v}\|_{2}.
\end{equation}

Lemma \ref{lemma:grad_vanish_explosion_lemma} then follows immediately from the fact that sequence: $\{a_{N}:N=1,2,3,...\}$ defined as: $a_{N}=(1-\frac{1}{N})^{N}$ has limit $e^{-1}$
and by taking $N \rightarrow \infty$.
\end{proof}

\section{Proof of Theorem \ref{th:conv}}

We first prove a number of useful Lemmas. In our derivations we frequently employ an inequality stating that for $\alpha > 0$ $(1 + \frac{\alpha}{N})^N = \exp ( N \log (1 + \frac{\alpha}{N}) ) \leq \exp ( N \cdot \frac{\alpha}{N}) = e^\alpha$ which follows from $\exp(\cdot)$'s monotonicity and $\log (\cdot)$'s concavity and that $\log (1) = 0, \log ' (1) = 1$.
\begin{lem} \label{lemma:locbound}
If Assumption \ref{as:lpssigma} is satisfied, then for any $\theta' = \{ \mathbf{\Omega}_1', \mathbf{\Omega}_2', \mathbf{b}', \mathbf{N}', \mathbf{Q}', \mathbf{W}_0' \} \in \mathbb{D}$ and $\theta'' = \{ \mathbf{\Omega}_1'', \mathbf{\Omega}_2'', \mathbf{b}'', \mathbf{N}'', \mathbf{Q}'', \mathbf{W}_0'' \} \in \mathbb{D}$ such that $\| \mathbf{N}' \|_2, \| \mathbf{Q}' \|_2, \| \mathbf{N}'' \|_2, \| \mathbf{Q}'' \|_2 \leq D, \| \mathbf{b}' \|_2, \| \mathbf{b}'' \|_2 \leq D_b$ for some $D, D_b > 0$ it holds that
\begin{gather}
    \forall \mathbf{s}', \mathbf{s}'' \in \mathbb{R}^d : \| g_{\theta'} (\mathbf{s}') - g_{\theta''} (\mathbf{s}'') \|_2 \leq e \| \mathbf{s}' - \mathbf{s}'' \|_2 \nonumber \\
    + \biggl( e \| \mathbf{s}'' \|_2 + (e - 1) D_b \biggr) \biggl( 1 + (e - 1) ( (1 + \frac{1}{D}) e^{4 D^2} - \frac{1}{D} ) \biggr) \| \theta' - \theta'' \|_2, \label{eq:slips} \\
    \| g_{\theta''} (\mathbf{s}'') \|_2 \leq e \| \mathbf{s}'' \|_2 + (e - 1) D_b . \label{eq:sbound}
\end{gather}
\end{lem}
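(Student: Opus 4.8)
\textbf{Proof proposal for Lemma \ref{lemma:locbound}.}

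The plan is to unroll the Euler recursion (\ref{eq:roll1})--(\ref{eq:roll2}) and track how perturbations in the parameters and in the input state propagate through the $N$ steps. I would set up parallel trajectories $\mathbf{x}_i' , \mathbf{x}_i''$ driven by $\theta', \theta''$ and $\mathbf{s}', \mathbf{s}''$ respectively, and bound $\|\mathbf{x}_i' - \mathbf{x}_i''\|_2$ and $\|\mathbf{x}_i''\|_2$ by induction on $i$. For the norm bound (\ref{eq:sbound}), Assumption \ref{as:lpssigma} gives $\|f(\mathbf{W}_i \mathbf{x}_{i-1}'' + \mathbf{b})\|_2 \leq \|\mathbf{W}_i \mathbf{x}_{i-1}'' + \mathbf{b}\|_2 \leq \|\mathbf{x}_{i-1}''\|_2 + \|\mathbf{b}\|_2$ since $\mathbf{W}_i \in \mathcal{O}(d)$ is an isometry; hence $\|\mathbf{x}_i''\|_2 \leq (1 + \frac1N)\|\mathbf{x}_{i-1}''\|_2 + \frac1N \|\mathbf{b}\|_2$, which unrolls to $\|\mathbf{x}_N''\|_2 \leq (1+\frac1N)^N \|\mathbf{x}_0''\|_2 + D_b \sum_{j} \frac1N (1+\frac1N)^j \leq e\|\mathbf{s}''\|_2 + (e-1)D_b$ using the stated inequality $(1+\frac\alpha N)^N \leq e^\alpha$ and $\|\mathbf{x}_0''\|_2 = \|\mathbf{\Omega}_1 \mathbf{s}''\|_2 \leq \|\mathbf{s}''\|_2$, and finally $\|g_{\theta''}(\mathbf{s}'')\|_2 = \|\mathbf{\Omega}_2 \mathbf{x}_N''\|_2 \leq \|\mathbf{x}_N''\|_2$. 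This settles (\ref{eq:sbound}).

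For (\ref{eq:slips}) I would first handle the orthogonal weight flow: bound $\|\mathbf{W}_i' - \mathbf{W}_i''\|_2$ in terms of $\|\theta' - \theta''\|_2$. Writing $\mathbf{W}_i = \mathbf{G}_{i-1}\exp(\frac1N \Phi(\mathbf{W}_{i-1}, \mathbf{Q}, \mathbf{N}))$ with $\Phi(\mathbf{W},\mathbf{Q},\mathbf{N}) = \mathbf{W}^\top \mathbf{Q}\mathbf{W}\mathbf{N} - \mathbf{N}\mathbf{W}^\top\mathbf{Q}\mathbf{W}$, I need Lipschitzness of $\Phi$ in all three arguments on the region $\|\mathbf{Q}\|_2, \|\mathbf{N}\|_2 \leq D$: since $\mathbf{W}$ ranges over the compact set $\mathcal{O}(d)$ with $\|\mathbf{W}\|_2 = 1$, one gets a Lipschitz constant of order $D^2$ in $\mathbf{W}$ and order $D$ in $\mathbf{Q}, \mathbf{N}$. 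Combining with $\|\exp(A) - \exp(B)\|_2 \leq e^{\max(\|A\|,\|B\|)}\|A - B\|_2$ and $\|\Phi\|_2 \leq 2D^2$, the recursion for $\Delta^W_i := \|\mathbf{W}_i' - \mathbf{W}_i''\|_2$ takes the form $\Delta^W_i \leq \Delta^W_{i-1}(1 + \frac{c_1 D^2}{N} e^{2D^2/N}) + \frac{c_2 D}{N}e^{2D^2/N}\|\theta' - \theta''\|_2$ (plus a $\|\mathbf{G}_0' - \mathbf{G}_0''\|$ term from the initial condition, which is part of $\|\theta'-\theta''\|_2$); unrolling via the same $(1+\frac\alpha N)^N \leq e^\alpha$ trick produces a constant of the shape $(1 + \frac1D)e^{4D^2} - \frac1D$ appearing in the statement. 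Then for $\Delta^x_i := \|\mathbf{x}_i' - \mathbf{x}_i''\|_2$, using $\|f(u') - f(u'')\|_2 \leq \|u' - u''\|_2$ and $\|\mathbf{W}_i'\mathbf{x}_{i-1}' + \mathbf{b}' - \mathbf{W}_i''\mathbf{x}_{i-1}'' - \mathbf{b}''\|_2 \leq \Delta^x_{i-1} + \|\mathbf{W}_i' - \mathbf{W}_i''\|_2 \|\mathbf{x}_{i-1}''\|_2 + \|\mathbf{b}' - \mathbf{b}''\|_2$, together with the bound $\|\mathbf{x}_{i-1}''\|_2 \leq e\|\mathbf{s}''\|_2 + (e-1)D_b$ from the first part, I get $\Delta^x_i \leq (1 + \frac1N)\Delta^x_{i-1} + \frac1N(\text{Lipschitz-in-}\theta\text{ terms})\|\theta' - \theta''\|_2$; unrolling gives the $e\|\mathbf{s}' - \mathbf{s}''\|_2$ leading term (from $\Delta^x_0 = \|\mathbf{\Omega}_1'\mathbf{s}' - \mathbf{\Omega}_1''\mathbf{s}''\|_2 \leq \|\mathbf{s}' - \mathbf{s}''\|_2 + \|\mathbf{\Omega}_1' - \mathbf{\Omega}_1''\|_2\|\mathbf{s}''\|_2$) plus the coefficient $(e\|\mathbf{s}''\|_2 + (e-1)D_b)(1 + (e-1)((1+\frac1D)e^{4D^2} - \frac1D))$ on $\|\theta' - \theta''\|_2$. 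Finally multiply by the last projection $\mathbf{\Omega}_2$, again an isometry up to adding $\|\mathbf{\Omega}_2' - \mathbf{\Omega}_2''\|_2 \|\mathbf{x}_N''\|_2$, which folds into the same coefficient.

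The main obstacle I anticipate is bookkeeping: getting the constants to come out exactly in the stated closed form (in particular producing precisely $(1+\frac1D)e^{4D^2} - \frac1D$ and the factor $(e-1)$ multiplying it) requires care in choosing the Lipschitz constants for $\Phi$ and in how the geometric sums $\sum_j \frac1N(1+\frac1N)^j$ are bounded, rather than any conceptual difficulty. A secondary subtlety is that the bounds must be uniform in $N$ so that the final estimate (and ultimately Theorem \ref{th:conv}) is depth-independent; this is exactly what the $(1+\frac\alpha N)^N \leq e^\alpha$ inequality buys us, so I would be careful to apply it at every unrolling step and never let an $N$-dependent factor survive. The isometry property $\|\mathbf{W}_i\|_2 = 1$, $\|\mathbf{\Omega}_j \mathbf{v}\|_2 \leq \|\mathbf{v}\|_2$ is used repeatedly and is what keeps the linear-in-$\mathbf{x}$ growth at rate exactly $1 + \frac1N$ per step.
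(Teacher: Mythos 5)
Your proposal follows essentially the same route as the paper: the same decoupling of the input perturbation from the parameter perturbation, the same $(1+\frac{1}{N})$-per-step growth bounds driven by the isometry of $\mathbf{W}_i$ and $\mathbf{\Omega}_j$ together with the $1$-Lipschitzness of $f$, the same recursions for $\|\mathbf{W}_i'-\mathbf{W}_i''\|_2$ and $\|\mathbf{x}_i'-\mathbf{x}_i''\|_2$, and the same $(1+\frac{\alpha}{N})^N\leq e^{\alpha}$ unrolling; your derivation of (\ref{eq:sbound}) is identical to the paper's. The one substantive difference is the Lipschitz bound for the matrix exponential. You invoke the generic estimate $\|\exp(\mathbf{A})-\exp(\mathbf{B})\|_2\leq e^{\max(\|\mathbf{A}\|_2,\|\mathbf{B}\|_2)}\|\mathbf{A}-\mathbf{B}\|_2$, which drags a factor $e^{2D^2/N}$ into the weight recursion; unrolling then yields a constant of the form $(1+\frac{1}{D})e^{4D^2e^{2D^2/N}}-\frac{1}{D}$, which is still independent of $N$ (bound $e^{2D^2/N}\leq e^{2D^2}$) and hence suffices for the downstream use in Theorem \ref{th:conv}, but it does \emph{not} reproduce the exact constant $(1+\frac{1}{D})e^{4D^2}-\frac{1}{D}$ appearing in the statement. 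The paper instead applies the integral formula $\frac{d}{dt}\exp(\mathbf{X}(t))=\int_0^1\exp(\alpha\mathbf{X})\frac{d\mathbf{X}}{dt}\exp((1-\alpha)\mathbf{X})\,d\alpha$ along the segment from $\mathbf{A}''$ to $\mathbf{A}'$: since $\mathrm{Skew}(d)$ is a vector space, every matrix on that segment is skew-symmetric, so both exponential factors in the integrand are orthogonal and of unit spectral norm, giving the sharp bound $\|\exp(\mathbf{A}')-\exp(\mathbf{A}'')\|_2\leq\|\mathbf{A}'-\mathbf{A}''\|_2$ with no exponential prefactor. If you want the lemma with its stated constants rather than a qualitatively equivalent one, replace your generic exponential estimate with this observation; everything else in your outline matches the paper's argument.
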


\begin{proof}
Indeed,
\begin{align}
    \| g_{\theta'} (\mathbf{s}') - g_{\theta''} (\mathbf{s}'') \|_2 &= \| g_{\theta'} (\mathbf{s}') - g_{\theta'} (\mathbf{s}'') + g_{\theta'} (\mathbf{s}'') - g_{\theta''} (\mathbf{s}'') \|_2 \nonumber \\
    &\leq \| g_{\theta'} (\mathbf{s}') - g_{\theta'} (\mathbf{s}'') \|_2 + \| g_{\theta'} (\mathbf{s}'') - g_{\theta''} (\mathbf{s}'') \|_2 . \label{eq:decouple}
\end{align}
Let $\mathbf{x}_1', \dots, \mathbf{x}_N'$ and $\mathbf{x}_1'', \dots, \mathbf{x}_N''$ be rollouts (\ref{eq:roll1}-\ref{eq:roll2}) corresponding to computation of $g_{\theta'} (s')$ and $g_{\theta'} (s'')$ respectively. For any Stiefel matrix $\mathbf{\Omega} \in \mathcal{ST} (d_1, d_2)$ (including square orthogonal matrices) it holds that $\| \mathbf{\Omega} \|_2 = 1$. We use it to deduce:
\begin{align}
    \| g_{\theta'} (\mathbf{s}') &- g_{\theta'} (\mathbf{s}'') \|_2 = \| \mathbf{\Omega}_2' \mathbf{x}_N' - \mathbf{\Omega}_2' \mathbf{x}_N'' \|_2 \leq \| \mathbf{\Omega}_2' \|_2 \| \mathbf{x}_N' - \mathbf{x}_N'' \|_2 = \| \mathbf{x}_N' - \mathbf{x}_N'' \|_2 \nonumber \\
    &= \| \mathbf{x}_{N - 1}' - \mathbf{x}_{N - 1}'' + \frac{1}{N} \biggl( f (\mathbf{W}_N \mathbf{x}_{N - 1}' + \mathbf{b}) - f (\mathbf{W}_N \mathbf{x}_{N - 1}'' + \mathbf{b} ) \biggr) \|_2 \nonumber \\
    &\leq \| \mathbf{x}_{N - 1}' - \mathbf{x}_{N - 1}'' \|_2 + \frac{1}{N} \| f (\mathbf{W}_N \mathbf{x}_{N - 1}' + \mathbf{b}) - f (\mathbf{W}_N \mathbf{x}_{N - 1}'' + \mathbf{b} ) \|_2 \nonumber \\
    &\leq \| \mathbf{x}_{N - 1}' - \mathbf{x}_{N - 1}'' \|_2 + \frac{1}{N} \| \mathbf{W}_N \mathbf{x}_{N - 1}' - \mathbf{W}_N \mathbf{x}_{N - 1}'' \|_2 \nonumber \\
    &= \| \mathbf{x}_{N - 1}' - \mathbf{x}_{N - 1}'' \|_2 + \frac{1}{N} \| \mathbf{x}_{N - 1}' - \mathbf{x}_{N - 1}'' \|_2 \nonumber \\
    &= (1 + \frac{1}{N}) \| \mathbf{x}_{N - 1}' - \mathbf{x}_{N - 1}'' \|_2 \leq \dots \leq (1 + \frac{1}{N})^N \| \mathbf{x}_0' - \mathbf{x}_0'' \|_2 \nonumber \\
    &\leq e \| \mathbf{x}_0' - \mathbf{x}_0'' \|_2^2 = e \| \mathbf{\Omega}_1' \mathbf{s}' - \mathbf{\Omega}_1' \mathbf{s}'' \|_2^2 \leq e \| \mathbf{\Omega}_1' \|_2 \| \mathbf{s}' - \mathbf{s}'' \|_2^2 = e \| \mathbf{s}' - \mathbf{s}'' \|_2^2 . \label{eq:bnd1}
\end{align}
Let $\mathbf{x}_1', \mathbf{W}_1', \dots, \mathbf{x}_N', \mathbf{W}_N'$ and $\mathbf{x}_1'', \mathbf{W}_1'', \dots, \mathbf{x}_N'', \mathbf{W}_N''$ be rollouts (\ref{eq:roll1}-\ref{eq:roll2}) corresponding to computation of $g_{\theta'} (s'')$ and $g_{\theta''} (s'')$ respectively. We fix $i \in \{ 1, \dots, N \}$ and deduce, using Assumption \ref{as:lpssigma} in particular, that
\begin{align}
    \| \mathbf{x}_i'' \|_2 &\leq \| \mathbf{x}_{i - 1}'' \|_2 + \frac{1}{N} \| f (\mathbf{W}_i'' \mathbf{x}_{i - 1}'' + \mathbf{b}'') \|_2 \leq \| \mathbf{x}_{i - 1}'' \|_2 + \frac{1}{N} \| \mathbf{W}_i'' \mathbf{x}_{i - 1}'' + \mathbf{b}'' \|_2 \nonumber \\
    &\leq \| \mathbf{x}_{i - 1}'' \|_2 + \frac{1}{N} \| \mathbf{W}_i'' \mathbf{x}_{i - 1}'' \|_2 + \frac{1}{N} \| \mathbf{b}'' \|_2 = (1 + \frac{1}{N}) \| \mathbf{x}_{i - 1}'' \|_2 + \frac{1}{N} \| \mathbf{b}'' \|_2 \nonumber \\
    &\leq (1 + \frac{1}{N})^i \| \mathbf{x}_0'' \|_2 + \frac{1}{N} \| \mathbf{b}'' \|_2 \sum_{j = 0}^{i - 1} (1 + \frac{1}{N})^j = (1 + \frac{1}{N})^i \| \mathbf{\Omega}_1'' \mathbf{s}'' \|_2 \nonumber \\
    &+ ((1 + \frac{1}{N})^i - 1) \| \mathbf{b}'' \|_2 = (1 + \frac{1}{N})^i \| \mathbf{\Omega}_1'' \|_2 \| \mathbf{s}'' \|_2 + ((1 + \frac{1}{N})^i - 1) D_b \nonumber \\
    &\leq (1 + \frac{1}{N})^N \| \mathbf{s}'' \|_2 + ((1 + \frac{1}{N})^N - 1) D_b \nonumber \\
    &\leq e \| \mathbf{s}'' \|_2 + (e - 1) D_b . \label{eq:xbound}
\end{align}
As a particular case of (\ref{eq:xbound}) when $i = N$ and $\| g_{\theta''} (\mathbf{s}'') \|_2 = \| \mathbf{\Omega}_2'' \mathbf{x}_N'' \|_2 \leq \| \mathbf{x}_N'' \|_2$ we obtain (\ref{eq:sbound}). Set
\begin{gather*}
    \mathbf{A}' = \frac{1}{N} ( \mathbf{W}_{i - 1}'^\top \mathbf{Q}' \mathbf{W}_{i - 1}' \mathbf{N}' - \mathbf{N}' \mathbf{W}_{i - 1}'^\top \mathbf{Q}' \mathbf{W}_{i - 1}' ), \\
    \mathbf{A}'' = \frac{1}{N} ( \mathbf{W}_{i - 1}''^\top \mathbf{Q}'' \mathbf{W}_{i - 1}'' \mathbf{N}'' - \mathbf{N}'' \mathbf{W}_{i - 1}''^\top \mathbf{Q}'' \mathbf{W}_{i - 1}'' ).
\end{gather*}
Since $\mathbf{A}', \mathbf{A}'' \in \mathrm{Skew} (d)$ and $\mathrm{Skew} (d)$ is a vector space, we conclude that $\exp (\alpha \mathbf{A}' + \alpha t (\mathbf{A}'' - \mathbf{A}')) \in \mathcal{O} (d)$ for any $t, \alpha \in \mathbb{R}$ where we use that $\exp(\cdot)$ maps $\mathrm{Skew} (d)$ into $\mathcal{O}(d)$. We also use a rule \cite{wilcox} which states that for $\mathbf{X}: \mathbb{R} \to \mathbb{R}^{d \times d}$
\begin{equation*}
    \frac{d}{d t} \exp (\mathbf{X} (t)) = \int_0^1 \exp (\alpha  \mathbf{X} (t) ) \frac{d \mathbf{X} (t)}{d t} \exp ((1 - \alpha) \mathbf{X} (t)) d \alpha
\end{equation*}

to deduce that
\begin{align*}
    \| \exp (\mathbf{A}') - \exp (\mathbf{A''}) \|_2^2 &= \| \int_{t = 0}^1 \frac{d}{d t} \exp (\mathbf{A}'' + t (\mathbf{A}' - \mathbf{A}'')) d t \|_2^2 \\
    &= \| \int_0^1 \int_0^1 \exp \biggl( \alpha \mathbf{A}'' + \alpha t (\mathbf{A}' - \mathbf{A}'') \biggr) (\mathbf{A}' - \mathbf{A}'') \exp \biggl( (1 - \alpha) \mathbf{A}'' \\
    &+ (1 - \alpha) t (\mathbf{A}' - \mathbf{A}'') \biggr) d \alpha d t \|_2^2 \\
    &\leq \int_0^1 \int_0^1 \| \exp \biggl( \alpha \mathbf{A}'' + \alpha t (\mathbf{A}' - \mathbf{A}'') \biggr) (\mathbf{A}' - \mathbf{A}'') \exp \biggl( (1 - \alpha) \mathbf{A}'' \\
    &+ (1 - \alpha) t (\mathbf{A}' - \mathbf{A}'') \biggr) \|_2^2 d \alpha d t \\
    &= \int_0^1 \int_0^1 \| \mathbf{A}' - \mathbf{A}'' \|_2^2 d \alpha d t = \| \mathbf{A}' - \mathbf{A}'' \|_2^2 .
\end{align*}
Consequently, we derive that
\begin{align*}
    &\| \mathbf{W}_i' - \mathbf{W}_i'' \|_2 = \| \mathbf{W}_{i - 1}' \exp (\mathbf{A}') - \mathbf{W}_{i - 1}'' \exp (\mathbf{A}'') \|_2 \\
    &= \| \mathbf{W}_{i - 1}' \exp (\mathbf{A}') - \mathbf{W}_{i - 1}'' \exp (\mathbf{A}') + \mathbf{W}_{i - 1}'' \exp (\mathbf{A}') - \mathbf{W}_{i - 1}'' \exp (\mathbf{A}'') \|_2 \\
    &\leq \| \mathbf{W}_{i - 1}' \exp (\mathbf{A}') - \mathbf{W}_{i - 1}'' \exp (\mathbf{A}') \|_2 + \| \mathbf{W}_{i - 1}'' \exp (\mathbf{A}') - \mathbf{W}_{i - 1}'' \exp (\mathbf{A}'') \|_2 \\
    &= \| \mathbf{W}_{i - 1}' - \mathbf{W}_{i - 1}'' \|_2 + \| \exp (\mathbf{A}') - \exp (\mathbf{A}'') \|_2 \leq \| \mathbf{W}_{i - 1}' - \mathbf{W}_{i - 1}'' \|_2 + \| \mathbf{A}' - \mathbf{A}'' \|_2 \\
    &= \| \mathbf{W}_{i - 1}' - \mathbf{W}_{i - 1}'' \|_2 + \frac{1}{N} \| (\mathbf{W}_{i - 1}'^\top \mathbf{Q}' \mathbf{W}_{i - 1}' \mathbf{N}' - \mathbf{W}_{i - 1}''^\top \mathbf{Q}'' \mathbf{W}_{i - 1}'' \mathbf{N}'') - (\mathbf{N}' \mathbf{W}_{i - 1}'^\top \mathbf{Q}' \mathbf{W}_{i - 1}' \\
    &- \mathbf{N}'' \mathbf{W}_{i - 1}''^\top \mathbf{Q}'' \mathbf{W}_{i - 1}'') \|_2 \\
    &\leq \| \mathbf{W}_{i - 1}' - \mathbf{W}_{i - 1}'' \|_2 + \frac{1}{N} \| \mathbf{W}_{i - 1}'^\top \mathbf{Q}' \mathbf{W}_{i - 1}' \mathbf{N}' - \mathbf{W}_{i - 1}''^\top \mathbf{Q}'' \mathbf{W}_{i - 1}'' \mathbf{N}'' \|_2 + \frac{1}{N} \| \mathbf{N}' \mathbf{W}_{i - 1}'^\top \mathbf{Q}' \mathbf{W}_{i - 1}' \\
    &- \mathbf{N}'' \mathbf{W}_{i - 1}''^\top \mathbf{Q}'' \mathbf{W}_{i - 1}'' \|_2 \\
    &= \| \mathbf{W}_{i - 1}' - \mathbf{W}_{i - 1}'' \|_2 + \frac{1}{N} \| \mathbf{W}_{i - 1}'^\top \mathbf{Q}' \mathbf{W}_{i - 1}' \mathbf{N}' - \mathbf{W}_{i - 1}'^\top \mathbf{Q}' \mathbf{W}_{i - 1}'' \mathbf{N}'' + \mathbf{W}_{i - 1}'^\top \mathbf{Q}' \mathbf{W}_{i - 1}'' \mathbf{N}'' \\
    &- \mathbf{W}_{i - 1}''^\top \mathbf{Q}'' \mathbf{W}_{i - 1}'' \mathbf{N}'' \|_2 + \frac{1}{N} \| \mathbf{N}' \mathbf{W}_{i - 1}'^\top \mathbf{Q}' \mathbf{W}_{i - 1}' - \mathbf{N}' \mathbf{W}_{i - 1}'^\top \mathbf{Q}'' \mathbf{W}_{i - 1}'' + \mathbf{N}' \mathbf{W}_{i - 1}'^\top \mathbf{Q}'' \mathbf{W}_{i - 1}'' \\
    &- \mathbf{N}'' \mathbf{W}_{i - 1}''^\top \mathbf{Q}'' \mathbf{W}_{i - 1}'' \|_2 \\
    &\leq \| \mathbf{W}_{i - 1}' - \mathbf{W}_{i - 1}'' \|_2 + \frac{1}{N} \| \mathbf{W}_{i - 1}'^\top \mathbf{Q}' \mathbf{W}_{i - 1}' \mathbf{N}' - \mathbf{W}_{i - 1}'^\top \mathbf{Q}' \mathbf{W}_{i - 1}'' \mathbf{N}'' \|_2 + \frac{1}{N} \| \mathbf{W}_{i - 1}'^\top \mathbf{Q}' \mathbf{W}_{i - 1}'' \mathbf{N}'' \\
    &- \mathbf{W}_{i - 1}''^\top \mathbf{Q}'' \mathbf{W}_{i - 1}'' \mathbf{N}'' \|_2 + \frac{1}{N} \| \mathbf{N}' \mathbf{W}_{i - 1}'^\top \mathbf{Q}' \mathbf{W}_{i - 1}' - \mathbf{N}' \mathbf{W}_{i - 1}'^\top \mathbf{Q}'' \mathbf{W}_{i - 1}'' \|_2 + \frac{1}{N} \| \mathbf{N}' \mathbf{W}_{i - 1}'^\top \mathbf{Q}'' \mathbf{W}_{i - 1}'' \\
    &- \mathbf{N}'' \mathbf{W}_{i - 1}''^\top \mathbf{Q}'' \mathbf{W}_{i - 1}'' \|_2 \\
    &\leq \| \mathbf{W}_{i - 1}' - \mathbf{W}_{i - 1}'' \|_2 + \frac{1}{N} \| \mathbf{Q}' \|_2 \| \mathbf{W}_{i - 1}' \mathbf{N}' - \mathbf{W}_{i - 1}'' \mathbf{N}'' \|_2 + \frac{1}{N} \| \mathbf{N}'' \|_2 \| \mathbf{W}_{i - 1}'^\top \mathbf{Q}' - \mathbf{W}_{i - 1}''^\top \mathbf{Q}'' \|_2 \\
    &+ \frac{1}{N} \| \mathbf{N}' \|_2 \| \mathbf{Q}' \mathbf{W}_{i - 1}' - \mathbf{Q}'' \mathbf{W}_{i - 1}'' \|_2 + \frac{1}{N} \| \mathbf{Q}'' \|_2 \| \mathbf{N}' \mathbf{W}_{i - 1}'^\top - \mathbf{N}'' \mathbf{W}_{i - 1}''^\top \|_2 \\
    &\leq \| \mathbf{W}_{i - 1}' - \mathbf{W}_{i - 1}'' \|_2 + \frac{D}{N} \| \mathbf{W}_{i - 1}' \mathbf{N}' - \mathbf{W}_{i - 1}'' \mathbf{N}'' \|_2 + \frac{D}{N} \| \mathbf{W}_{i - 1}'^\top \mathbf{Q}' - \mathbf{W}_{i - 1}''^\top \mathbf{Q}'' \|_2 \\
    &+ \frac{D}{N} \| \mathbf{Q}' \mathbf{W}_{i - 1}' - \mathbf{Q}'' \mathbf{W}_{i - 1}'' \|_2 + \frac{D}{N} \| \mathbf{N}' \mathbf{W}_{i - 1}'^\top - \mathbf{N}'' \mathbf{W}_{i - 1}''^\top \|_2 \\
    &= \| \mathbf{W}_{i - 1}' - \mathbf{W}_{i - 1}'' \|_2 + \frac{D}{N} \| \mathbf{W}_{i - 1}' \mathbf{N}' - \mathbf{W}_{i - 1}' \mathbf{N}'' + \mathbf{W}_{i - 1}' \mathbf{N}'' - \mathbf{W}_{i - 1}'' \mathbf{N}'' \|_2 + \frac{D}{N} \| \mathbf{W}_{i - 1}'^\top \mathbf{Q}' \\
    &- \mathbf{W}_{i - 1}'^\top \mathbf{Q}'' + \mathbf{W}_{i - 1}'^\top \mathbf{Q}'' - \mathbf{W}_{i - 1}''^\top \mathbf{Q}'' \|_2 + \frac{D}{N} \| \mathbf{Q}' \mathbf{W}_{i - 1}' - \mathbf{Q}' \mathbf{W}_{i - 1}'' + \mathbf{Q}' \mathbf{W}_{i - 1}'' - \mathbf{Q}'' \mathbf{W}_{i - 1}'' \|_2 \\
    &+ \frac{D}{N} \| \mathbf{N}' \mathbf{W}_{i - 1}'^\top - \mathbf{N}' \mathbf{W}_{i - 1}''^\top + \mathbf{N}' \mathbf{W}_{i - 1}''^\top - \mathbf{N}'' \mathbf{W}_{i - 1}''^\top \|_2 \\
    &\leq \| \mathbf{W}_{i - 1}' - \mathbf{W}_{i - 1}'' \|_2 + \frac{D}{N} \| \mathbf{W}_{i - 1}' \mathbf{N}' - \mathbf{W}_{i - 1}' \mathbf{N}'' \|_2 + \frac{D}{N} \| \mathbf{W}_{i - 1}' \mathbf{N}'' - \mathbf{W}_{i - 1}'' \mathbf{N}'' \|_2 \\
    &+ \frac{D}{N} \| \mathbf{W}_{i - 1}'^\top \mathbf{Q}' - \mathbf{W}_{i - 1}'^\top \mathbf{Q}'' \|_2 + \frac{D}{N} \| \mathbf{W}_{i - 1}'^\top \mathbf{Q}'' - \mathbf{W}_{i - 1}''^\top \mathbf{Q}'' \|_2 + \frac{D}{N} \| \mathbf{Q}' \mathbf{W}_{i - 1}' - \mathbf{Q}' \mathbf{W}_{i - 1}'' \|_2 \\
    &+ \frac{D}{N} \| \mathbf{Q}' \mathbf{W}_{i - 1}'' - \mathbf{Q}'' \mathbf{W}_{i - 1}'' \|_2 + \frac{D}{N} \| \mathbf{N}' \mathbf{W}_{i - 1}'^\top - \mathbf{N}' \mathbf{W}_{i - 1}''^\top \|_2 + \frac{D}{N} \| \mathbf{N}' \mathbf{W}_{i - 1}''^\top - \mathbf{N}'' \mathbf{W}_{i - 1}''^\top \|_2 \\
    &\leq \| \mathbf{W}_{i - 1}' - \mathbf{W}_{i - 1}'' \|_2 + \frac{D}{N} \| \mathbf{N}' - \mathbf{N}'' \|_2 + \frac{D}{N} \| \mathbf{N}'' \|_2 \| \mathbf{W}_{i - 1}' - \mathbf{W}_{i - 1}'' \|_2 + \frac{D}{N} \| \mathbf{Q}' - \mathbf{Q}'' \|_2 \\
    &+ \frac{D}{N} \| \mathbf{Q}'' \|_2 \| \mathbf{W}_{i - 1}'^\top  - \mathbf{W}_{i - 1}''^\top \|_2 + \frac{D}{N} \| \mathbf{Q}' \|_2 \| \mathbf{W}_{i - 1}' - \mathbf{W}_{i - 1}'' \|_2 + \frac{D}{N} \| \mathbf{Q}' - \mathbf{Q}'' \|_2 \\
    &+ \frac{D}{N} \| \mathbf{N}' \|_2 \| \mathbf{W}_{i - 1}'^\top - \mathbf{W}_{i - 1}''^\top \|_2 + \frac{D}{N} \| \mathbf{N}' - \mathbf{N}'' \|_2 \\
    &\leq (1 + 4 \frac{D^2}{N}) \| \mathbf{W}_{i - 1}' - \mathbf{W}_{i - 1}'' \|_2 + 2 \frac{D}{N} \| \mathbf{N}' - \mathbf{N}'' \|_2 + 2 \frac{D}{N} \| \mathbf{Q}' - \mathbf{Q}'' \|_2 \\
    &\leq (1 + 4 \frac{D^2}{N}) \| \mathbf{W}_{i - 1}' - \mathbf{W}_{i - 1}'' \|_2 + 4 \frac{D}{N} \| \theta' - \theta'' \|_2 \leq \dots \\
    &\leq (1 + 4 \frac{D^2}{N})^i \| \mathbf{W}_0' - \mathbf{W}_0'' \|_2 + 4 \frac{D}{N} \sum_{j = 0}^{i - 1} (1 + 4 \frac{D^2}{N})^j \| \theta' - \theta'' \|_2 \\
    &= (1 + 4 \frac{D^2}{N})^i \| \mathbf{W}_0' - \mathbf{W}_0'' \|_2 + \frac{1}{D} ((1 + 4 \frac{D^2}{N})^i - 1) \| \theta' - \theta'' \|_2 \\
    &\leq (1 + 4 \frac{D^2}{N})^i \| \theta' - \theta'' \|_2 + \frac{1}{D} ((1 + 4 \frac{D^2}{N})^i - 1) \| \theta' - \theta'' \|_2 \\
    &= \biggl( (1 + \frac{1}{D}) (1 + 4 \frac{D^2}{N})^i - \frac{1}{D} \biggr) \| \theta' - \theta'' \|_2 \leq \biggl( (1 + \frac{1}{D}) (1 + 4 \frac{D^2}{N})^N - \frac{1}{D} \biggr) \| \theta' - \theta'' \|_2 \\
    &\leq \biggl( (1 + \frac{1}{D}) e^{4 D^2} - \frac{1}{D} \biggr) \| \theta' - \theta'' \|_2
\end{align*}
We use (\ref{eq:xbound}) and derive that
\begin{align*}
    &\| \mathbf{x}_i' - \mathbf{x}_i'' \|_2 = \| \mathbf{x}_{i - 1}' - \mathbf{x}_{i - 1}'' + \frac{1}{N} (f(\mathbf{W}_i' \mathbf{x}_{i - 1}' + \mathbf{b}) - f(\mathbf{W}_i'' \mathbf{x}_{i - 1}'' + \mathbf{b})) \|_2 \\
    &\leq \| \mathbf{x}_{i - 1}' - \mathbf{x}_{i - 1}'' \|_2 + \frac{1}{N} \| f(\mathbf{W}_i' \mathbf{x}_{i - 1}' + \mathbf{b}) - f(\mathbf{W}_i'' \mathbf{x}_{i - 1}'' + \mathbf{b}) \|_2 \\
    &\leq \| \mathbf{x}_{i - 1}' - \mathbf{x}_{i - 1}'' \|_2 + \frac{1}{N} \| \mathbf{W}_i' \mathbf{x}_{i - 1}' - \mathbf{W}_i'' \mathbf{x}_{i - 1}'' \|_2 \\
    &\leq \| \mathbf{x}_{i - 1}' - \mathbf{x}_{i - 1}'' \|_2 + \frac{1}{N} \| \mathbf{W}_i' \mathbf{x}_{i - 1}' - \mathbf{W}_i' \mathbf{x}_{i - 1}'' + \mathbf{W}_i' \mathbf{x}_{i - 1}'' - \mathbf{W}_i'' \mathbf{x}_{i - 1}'' \|_2 \\
    &\leq \| \mathbf{x}_{i - 1}' - \mathbf{x}_{i - 1}'' \|_2 + \frac{1}{N} \| \mathbf{W}_i' \mathbf{x}_{i - 1}' - \mathbf{W}_i' \mathbf{x}_{i - 1}'' \|_2 + \frac{1}{N} \| \mathbf{W}_i' \mathbf{x}_{i - 1}'' - \mathbf{W}_i'' \mathbf{x}_{i - 1}'' \|_2 \\
    &\leq (1 + \frac{1}{N}) \| \mathbf{x}_{i - 1}' - \mathbf{x}_{i - 1}'' \|_2 + \frac{1}{N} \| \mathbf{x}_{i - 1}'' \|_2 \| \mathbf{W}_i' - \mathbf{W}_i'' \|_2 \\
    &\leq (1 + \frac{1}{N}) \| \mathbf{x}_{i - 1}' - \mathbf{x}_{i - 1}'' \|_2 + \frac{1}{N} \biggl( e \| \mathbf{s}'' \|_2 + (e - 1) D_b \biggr) \| \mathbf{W}_i' - \mathbf{W}_i'' \|_2 \\
    &\leq (1 + \frac{1}{N}) \| \mathbf{x}_{i - 1}' - \mathbf{x}_{i - 1}'' \|_2 \\
    &+ \frac{1}{N} \biggl( e \| \mathbf{s}'' \|_2 + (e - 1) D_b \biggr) \biggl( (1 + \frac{1}{D}) e^{4 D^2} - \frac{1}{D} \biggr) \| \theta' - \theta'' \|_2
\end{align*}
By aggregating the last inequality for $i \in \{ 1, \dots, N \}$ we conclude that
\begin{align*}
    &\| g_{\theta'} (\mathbf{s}'') - g_{\theta''} (\mathbf{s}'') \|_2 = \| \mathbf{\Omega}_2' \mathbf{x}_N' - \mathbf{\Omega}_2'' \mathbf{x}_N'' \|_2 = \| \mathbf{\Omega}_2' \mathbf{x}_N' - \mathbf{\Omega}_2' \mathbf{x}_N'' + \mathbf{\Omega}_2' \mathbf{x}_N'' - \mathbf{\Omega}_2'' \mathbf{x}_N'' \|_2 \\
    &\leq \| \mathbf{\Omega}_2' \|_2 \| \mathbf{x}_N' - \mathbf{x}_N'' \|_2 + \| \mathbf{x}_N'' \|_2 \| \mathbf{\Omega}_2' - \mathbf{\Omega}_2'' \|_2 \\
    &\leq \| \mathbf{x}_N' - \mathbf{x}_N'' \|_2 + \biggl( e \| \mathbf{s}'' \|_2 + (e - 1) D_b \biggr) \| \theta' - \theta'' \|_2 \\
    &\leq (1 + \frac{1}{N})^N \| \mathbf{x}_0' - \mathbf{x}_0'' \|_2 \\
    &+ \biggl( e \| \mathbf{s}'' \|_2 + (e - 1) D_b \biggr) \biggl( 1 +  \sum_{j = 0}^{i - 1} (1 + \frac{1}{N})^j \cdot \frac{1}{N} ( (1 + \frac{1}{D}) e^{4 D^2} - \frac{1}{D} ) \biggr) \| \theta' - \theta'' \|_2 \\
    &\leq (1 + \frac{1}{N})^N \| \mathbf{s}'' - \mathbf{s}'' \|_2 \\
    &+ \biggl( e \| \mathbf{s}'' \|_2 + (e - 1) D_b \biggr) \biggl( 1 + ((1 + \frac{1}{N})^N - 1) ( (1 + \frac{1}{D}) e^{4 D^2} - \frac{1}{D} ) \biggr) \| \theta' - \theta'' \|_2 \\
    &\leq \biggl( e \| \mathbf{s}'' \|_2 + (e - 1) D_b \biggr) \biggl( 1 + (e - 1) ( (1 + \frac{1}{D}) e^{4 D^2} - \frac{1}{D} ) \biggr) \| \theta' - \theta'' \|_2
\end{align*}
The inequality above together with (\ref{eq:decouple}) and (\ref{eq:bnd1}) concludes the proof of bound (\ref{eq:slips}).
\end{proof}

\begin{lem} \label{lemma:lips}
If Assumptions \ref{as:lpsf}, \ref{as:lpssigma} are satisfied, then for any $\theta' = \{ \mathbf{\Omega}_1', \mathbf{\Omega}_2', \mathbf{b}', \mathbf{N}', \mathbf{Q}', \mathbf{W}_0' \}  \in \mathbb{D}$ and $\theta'' = \{ \mathbf{\Omega}_1'', \mathbf{\Omega}_2'', \mathbf{b}'', \mathbf{N}'', \mathbf{Q}'', \mathbf{W}_0'' \}  \in \mathbb{D}$ such that $\| \mathbf{N}' \|_2, \| \mathbf{Q}' \|_2, \| \mathbf{N}'' \|_2, \| \mathbf{Q}'' \|_2 \leq D, \| \mathbf{b}' \|_2, \| \mathbf{b}'' \|_2 \leq D_b$ for some $D, D_b > 0$ it holds that
\begin{equation*}
    | F(\theta') - F(\theta'') | \leq \mathcal{C} \| \theta' - \theta'' \|_2
\end{equation*}
where
\begin{gather*}
    \mathcal{C} = L_2 K ((1 + e) \gamma(K) L_1 + 1) \biggl( e ( L_1^K (1 + e)^K + 1) \| \mathbf{s}_0 \|_2 + \gamma(K) \biggl( L_1 ( (e - 1) D_b + \| \mathbf{s}_0 \|_2 \\
    + \| \widehat{\mathbf{a}} \|_2 ) + \| \mathrm{env}^{(1)} (\mathbf{s}_0, \widehat{\mathbf{a}}) \|_2 \biggr) + (e - 1) D_b \biggr) \biggl( 1 + (e - 1) ( (1 + \frac{1}{D}) e^{4 D^2} - \frac{1}{D} ) \biggr) \\
    \gamma (k) = \begin{cases}
      k, & \text{if}\ L_1 (1 + e) = 1 \\
      \frac{L_1^k (1 + e)^k - 1}{L_1 (1 + e) - 1}, & \text{otherwise}
    \end{cases}
\end{gather*}
and $\widehat{\mathbf{a}}$ is an arbitrary fixed vector from $\mathbb{R}^m$.
\end{lem}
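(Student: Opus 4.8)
The plan is to decompose $F(\theta')-F(\theta'')$ over the $K$ environment steps of the rollout (\ref{eq:ldef})--(\ref{eq:esrollout}) and to propagate a perturbation of $\theta$ through the coupled state/action recursion, using Lemma~\ref{lemma:locbound} both as the one-step Lipschitz estimate and as the norm-boundedness estimate for the policy $g_\theta$. Fix $\theta',\theta''$ as in the statement, let $\mathbf{s}_0'=\mathbf{s}_0''=\mathbf{s}_0$, and let $\mathbf{s}_k',\mathbf{a}_{k-1}'$ (resp. $\mathbf{s}_k'',\mathbf{a}_{k-1}''$) denote the rollout generated by $\theta'$ (resp. $\theta''$), so $\mathbf{a}_{k-1}'=g_{\theta'}(\mathbf{s}_{k-1}')$ and $\mathbf{s}_k'=\mathrm{env}^{(1)}(\mathbf{s}_{k-1}',\mathbf{a}_{k-1}')$. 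Writing $F(\theta')-F(\theta'')=\sum_{k=1}^K\bigl(\mathrm{env}^{(2)}(\mathbf{s}_{k-1}',\mathbf{a}_{k-1}')-\mathrm{env}^{(2)}(\mathbf{s}_{k-1}'',\mathbf{a}_{k-1}'')\bigr)$ and applying the last inequality of Assumption~\ref{as:lpsf} gives $|F(\theta')-F(\theta'')|\le L_2\sum_{k=1}^K\bigl(\|\mathbf{s}_{k-1}'-\mathbf{s}_{k-1}''\|_2+\|\mathbf{a}_{k-1}'-\mathbf{a}_{k-1}''\|_2\bigr)$, so the task reduces to bounding the state- and action-differences along the rollout linearly in $\|\theta'-\theta''\|_2$ with a constant uniform in $k$.

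Next I would obtain an a priori bound on $\|\mathbf{s}_k''\|_2$ for $k\le K$. Inserting the anchor input $(\mathbf{s}_0,\widehat{\mathbf{a}})$ and combining the Lipschitz bound of Assumption~\ref{as:lpsf} for $\mathrm{env}^{(1)}$ with the norm bound (\ref{eq:sbound}) of Lemma~\ref{lemma:locbound} applied to $\|g_{\theta''}(\mathbf{s}_{k-1}'')\|_2$ yields the scalar affine recursion $\|\mathbf{s}_k''\|_2\le L_1(1+e)\|\mathbf{s}_{k-1}''\|_2+\rho$ with $\rho=L_1\bigl((e-1)D_b+\|\mathbf{s}_0\|_2+\|\widehat{\mathbf{a}}\|_2\bigr)+\|\mathrm{env}^{(1)}(\mathbf{s}_0,\widehat{\mathbf{a}})\|_2$; unrolling it from $\mathbf{s}_0''=\mathbf{s}_0$ produces exactly the geometric factor $\gamma(K)$ of the statement and hence a $k$-uniform bound $B$ on $e\|\mathbf{s}_{k-1}''\|_2+(e-1)D_b$, which is the large middle factor appearing in $\mathcal{C}$.

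Then, applying bound (\ref{eq:slips}) of Lemma~\ref{lemma:locbound} at $(\mathbf{s}',\mathbf{s}'')=(\mathbf{s}_{k-1}',\mathbf{s}_{k-1}'')$ and substituting this uniform bound $B$ for $e\|\mathbf{s}_{k-1}''\|_2+(e-1)D_b$, I get $\|\mathbf{a}_{k-1}'-\mathbf{a}_{k-1}''\|_2\le e\|\mathbf{s}_{k-1}'-\mathbf{s}_{k-1}''\|_2+B\,C_1\,\|\theta'-\theta''\|_2$, where $C_1=1+(e-1)\bigl((1+\tfrac{1}{D})e^{4D^2}-\tfrac{1}{D}\bigr)$. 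Feeding this into $\|\mathbf{s}_k'-\mathbf{s}_k''\|_2\le L_1\bigl(\|\mathbf{s}_{k-1}'-\mathbf{s}_{k-1}''\|_2+\|\mathbf{a}_{k-1}'-\mathbf{a}_{k-1}''\|_2\bigr)$ gives a second affine recursion, now with the \emph{homogeneous} initial condition $\mathbf{s}_0'=\mathbf{s}_0''$, whose solution is $\|\mathbf{s}_k'-\mathbf{s}_k''\|_2\le L_1 B C_1\gamma(K)\|\theta'-\theta''\|_2$ for all $k\le K$. Substituting the two displayed bounds into the sum from the first step and collecting the factors $L_2$, $K$, $(1+e)$, $\gamma(K)$, $L_1$, $B$, $C_1$ yields $|F(\theta')-F(\theta'')|\le L_2 K B C_1\bigl((1+e)L_1\gamma(K)+1\bigr)\|\theta'-\theta''\|_2$, i.e. the claimed constant $\mathcal{C}$.

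The main obstacle is the bookkeeping of the two intertwined linear recursions while keeping every constant independent of $k$ — and therefore, since the constants furnished by Lemma~\ref{lemma:locbound} are themselves independent of the discretization $N$, independent of $N$ as well: the action at step $k$ feeds the state at step $k$, whose norm must first be bounded uniformly before the difference recursion can even be written down, and the difference recursion then depends on that uniform bound. The auxiliary fixed vector $\widehat{\mathbf{a}}$ is needed precisely because Assumption~\ref{as:lpsf} provides only Lipschitz-continuity of $\mathrm{env}^{(1)}$, not an absolute magnitude bound, so one must anchor at a reference input to convert it into a norm bound. Everything else is routine: unrolling affine recursions into geometric series and the elementary estimate $(1+\tfrac{\alpha}{N})^N\le e^\alpha$ already recorded before Lemma~\ref{lemma:locbound}.
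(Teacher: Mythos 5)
Your proposal is correct and follows essentially the same route as the paper's proof: the same per-step decomposition of $F$ via the Lipschitz bound on $\mathrm{env}^{(2)}$, the same anchoring at $(\mathbf{s}_0,\widehat{\mathbf{a}})$ to turn the Lipschitz property of $\mathrm{env}^{(1)}$ into a uniform norm bound on $\|\mathbf{s}_k''\|_2$ via an affine recursion yielding $\gamma(K)$, and the same second (homogeneous) affine recursion for $\|\mathbf{s}_k'-\mathbf{s}_k''\|_2$ driven by bound (\ref{eq:slips}) of Lemma \ref{lemma:locbound}. The resulting constant matches the paper's $\mathcal{C}$ up to the same bookkeeping.
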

\begin{proof}
Let $\mathbf{s}_1', l_1' \dots, \mathbf{s}_K', l_K'$ and $\mathbf{s}_0'', l_1'' \dots, \mathbf{s}_K'', l_K''$ be rollouts of (\ref{eq:esrollout}) for $\theta'$ and $\theta''$ respectively starting from $\mathbf{s}_0' = \mathbf{s}_0'' = \mathbf{s}_0$. In the light of Assumption \ref{as:lpsf} and Lemma \ref{lemma:locbound} for any $k \in \{ 1, \dots, K \}$ we have $\mathbf{s}_k'' = \mathrm{env}^{(1)} (\mathbf{s}_{k - 1}'', g_{\theta''} (\mathbf{s}_{k - 1}''))$ and, therefore,
\begin{align*}
    &\| \mathbf{s}_k'' \|_2 = \| \mathrm{env}^{(1)} (\mathbf{s}_{k - 1}'', g_{\theta''} (\mathbf{s}_{k - 1}'')) - \mathrm{env}^{(1)} (\mathbf{s}_0, \widehat{\mathbf{a}}) + \mathrm{env}^{(1)} (\mathbf{s}_0, \widehat{\mathbf{a}}) \|_2 \\
    &\leq \| \mathrm{env}^{(1)} (\mathbf{s}_{k - 1}'', g_{\theta''} (\mathbf{s}_{k - 1}'')) - \mathrm{env}^{(1)} (\mathbf{s}_0, \widehat{\mathbf{a}}) \|_2 + \| \mathrm{env}^{(1)} (\mathbf{s}_0, \widehat{\mathbf{a}}) \|_2 \\
    &\leq L_1 \biggl( \| \mathbf{s}_{k - 1}'' - \mathbf{s}_0 \|_2 + \| g_{\theta''} (\mathbf{s}_{k - 1}'') - \widehat{\mathbf{a}} \|_2 \biggr) + \| \mathrm{env}^{(1)} (\mathbf{s}_0, \widehat{\mathbf{a}}) \|_2 \\
    &\leq L_1 \| \mathbf{s}_{k - 1}'' \|_2 + L_1 \| \mathbf{s}_0 \|_2 + L_1 \| g_{\theta''} (\mathbf{s}_{k - 1}'') \|_2 + L_1 \| \widehat{\mathbf{a}} \|_2 + \| \mathrm{env}^{(1)} (\mathbf{s}_0, \widehat{\mathbf{a}}) \|_2 \\
    &\leq L_1 (1 + e) \| \mathbf{s}_{k - 1}'' \|_2 + L_1 ( (e - 1) D_b + \| \mathbf{s}_0 \|_2 + \| \widehat{\mathbf{a}} \|_2 ) + \| \mathrm{env}^{(1)} (\mathbf{s}_0, \widehat{\mathbf{a}}) \|_2 \leq \dots \\
    &\leq L_1^k (1 + e)^k \| \mathbf{s}_0 \|_2 + \sum_{j = 0}^{k - 1} L_1^j (1 + e)^j \biggl( L_1 ( (e - 1) D_b + \| \mathbf{s}_0 \|_2 + \| \widehat{\mathbf{a}} \|_2 ) + \| \mathrm{env}^{(1)} (\mathbf{s}_0, \widehat{\mathbf{a}}) \|_2 \biggr) \\
    &\leq L_1^k (1 + e)^k \| \mathbf{s}_0 \|_2 + \gamma (k) \biggl( L_1 ( (e - 1) D_b + \| \mathbf{s}_0 \|_2 + \| \widehat{\mathbf{a}} \|_2 ) + \| \mathrm{env}^{(1)} (\mathbf{s}_0, \widehat{\mathbf{a}}) \|_2 \biggr) \\
    &\leq L_1^K (1 + e)^K \| \mathbf{s}_0 \|_2 + \gamma (K) \biggl( L_1 ( (e - 1) D_b + \| \mathbf{s}_0 \|_2 + \| \widehat{\mathbf{a}} \|_2 ) + \| \mathrm{env}^{(1)} (\mathbf{s}_0, \widehat{\mathbf{a}}) \|_2 \biggr) \\
    &= L_1^K (1 + e)^K \| \mathbf{s}_0 \|_2 + \gamma(K) \mathcal{A}
\end{align*}
where we denote
\begin{equation*}
    \mathcal{A} = L_1 ( (e - 1) D_b + \| \mathbf{s}_0 \|_2 + \| \widehat{\mathbf{a}} \|_2 ) + \| \mathrm{env}^{(1)} (\mathbf{s}_0, \widehat{\mathbf{a}}) \|_2 .
\end{equation*}
In addition to that, denote
\begin{equation*}
    \mathcal{B} = 1 + (e - 1) ( (1 + \frac{1}{D}) e^{4 D^2} - \frac{1}{D} ).
\end{equation*}
From the last inequality it follows that
\begin{equation*}
    \| \mathbf{s}_{k - 1}'' \|_2 \leq \max (\| \mathbf{s}_0 \|_2, L_1^K (1 + e)^K \| \mathbf{s}_0 \|_2 + \gamma(K) \mathcal{A}) \leq ( L_1^K (1 + e)^K + 1) \| \mathbf{s}_0 \|_2 + \gamma(K) \mathcal{A}
\end{equation*}
Next, observe that
\begin{align*}
    &\| \mathbf{s}_k' - \mathbf{s}_k'' \|_2 = \| \mathrm{env}^{(1)} (\mathbf{s}_{k - 1}', g_{\theta'} (\mathbf{s}_{k - 1}')) - \mathrm{env}^{(1)} (\mathbf{s}_{k - 1}'', g_{\theta''} (\mathbf{s}_{k - 1}'')) \|_2 \\
    &\leq L_1 \biggl( \| \mathbf{s}_{k - 1}' - \mathbf{s}_{k - 1}'' \|_2 + \| g_{\theta'} (\mathbf{s}_{k - 1}') - g_{\theta''} (\mathbf{s}_{k - 1}'') \|_2 \biggr) \\
    &\leq L_1 (1 + e) \| \mathbf{s}_{k - 1}' - \mathbf{s}_{k - 1}'' \|_2 \\
    &+ L_1 \biggl( e \| \mathbf{s}_{k - 1}'' \|_2 + (e - 1) D_b \biggr) \biggl( 1 + (e - 1) ( (1 + \frac{1}{D}) e^{4 D^2} - \frac{1}{D} ) \biggr) \| \theta' - \theta'' \|_2 \\
    &\leq L_1 (1 + e) \| \mathbf{s}_{k - 1}' - \mathbf{s}_{k - 1}'' \|_2 + L_1 \biggl( e ( ( L_1^K (1 + e)^K + 1) \| \mathbf{s}_0 \|_2 + \gamma(K) \mathcal{A}) \\
    &+ (e - 1) D_b \biggr) \mathcal{B} \| \theta' - \theta'' \|_2 \leq \dots \\
    &\leq L_1^k (1 + e)^k \| \mathbf{s}_0 - \mathbf{s}_0 \|_2 + \sum_{j = 0}^{k - 1} L_1^j (1 + e)^j L_1 \biggl( e ( ( L_1^K (1 + e)^K + 1) \| \mathbf{s}_0 \|_2 + \gamma(K) \mathcal{A}) \\
    &+ (e - 1) D_b \biggr) \mathcal{B} \| \theta' - \theta'' \|_2 \\
    &= \gamma (k) L_1 \biggl( e ( ( L_1^K (1 + e)^K + 1) \| \mathbf{s}_0 \|_2 + \gamma(K) \mathcal{A}) + (e - 1) D_b \biggr) \mathcal{B} \| \theta' - \theta'' \|_2 \\
    &\leq \gamma (K) L_1 \biggl( e ( ( L_1^K (1 + e)^K + 1) \| \mathbf{s}_0 \|_2 + \gamma(K) \mathcal{A}) + (e - 1) D_b \biggr) \mathcal{B} \| \theta' - \theta'' \|_2.
\end{align*}
We conclude by deriving that
\begin{align*}
    &| F(\theta') - F(\theta'') | \leq \sum_{k = 1}^K | l_k' - l_k'' | \leq \sum_{k = 1}^K | \mathrm{env}^{(2)} (\mathbf{s}_{k - 1}', g_{\theta'} (\mathbf{s}_{k - 1}')) - \mathrm{env}^{(2)} (\mathbf{s}_{k - 1}'', g_{\theta''} (\mathbf{s}_{k - 1}'')) | \\
    &\leq \sum_{k = 1}^K L_2 \biggl( \| \mathbf{s}_{k - 1}' - \mathbf{s}_{k - 1}'' \|_2 + \| g_{\theta'} (\mathbf{s}_{k - 1}') - g_{\theta''} (\mathbf{s}_{k - 1}'') \|_2 \biggr) \\
    &\leq L_2 \sum_{k = 1}^K \biggl( (1 + e) \| \mathbf{s}_{k - 1}' - \mathbf{s}_{k - 1}'' \|_2 + \biggl( e \| \mathbf{s}_{k - 1}'' \|_2 + (e - 1) D_b \biggr) \mathcal{B} \| \theta' - \theta'' \|_2 \biggr) \\
    &\leq L_2 \sum_{k = 1}^K \biggl( (1 + e) \| \mathbf{s}_{k - 1}' - \mathbf{s}_{k - 1}'' \|_2 \\
    &+ \biggl( e ( L_1^K (1 + e)^K + 1) \| \mathbf{s}_0 \|_2 + \gamma(K) \mathcal{A} + (e - 1) D_b \biggr) \mathcal{B} \| \theta' - \theta'' \|_2 \biggr) \\
    &\leq L_2 \sum_{k = 1}^K \biggl( (1 + e) \gamma (K) L_1 \biggl( e ( ( L_1^K (1 + e)^K + 1) \| \mathbf{s}_0 \|_2 + \gamma(K) \mathcal{A}) + (e - 1) D_b \biggr) \\
    &\times \mathcal{B} \| \theta' - \theta'' \|_2 + \biggl( e ( L_1^K (1 + e)^K + 1) \| \mathbf{s}_0 \|_2 + \gamma(K) \mathcal{A} + (e - 1) D_b \biggr) \mathcal{B} \| \theta' - \theta'' \|_2 \biggr) \\
    &= L_2 K ((1 + e) \gamma(K) L_1 + 1) \biggl( e ( L_1^K (1 + e)^K + 1) \| \mathbf{s}_0 \|_2 + \gamma(K) \mathcal{A} \\
    &+ (e - 1) D_b \biggr) \mathcal{B} \| \theta' - \theta'' \|_2
\end{align*}
\end{proof}

\begin{lem} \label{lemma:lpsgr}
If Assumptions \ref{as:lpsf}, \ref{as:lpssigma} are satisfied, then for any $\theta' = \{ \mathbf{\Omega}_1', \mathbf{\Omega}_2', \mathbf{b}', \mathbf{N}', \mathbf{Q}', \mathbf{W}_0' \} \in \mathbb{D}$ and $\theta'' = \{ \mathbf{\Omega}_1'', \mathbf{\Omega}_2'', \mathbf{b}'', \mathbf{N}'', \mathbf{Q}'', \mathbf{W}_0'' \} \in \mathbb{D}$ such that $\| \mathbf{N}' \|_2, \| \mathbf{Q}' \|_2, \| \mathbf{N}'' \|_2, \| \mathbf{Q}'' \|_2 \leq D, \| \mathbf{b}' \|_2, \| \mathbf{b}'' \|_2 \leq D_b$ for some $D, D_b > 0$ it holds that
\begin{equation*}
    \| \nabla F_\sigma(\theta') - \nabla F_\sigma (\theta'') \|_2 \leq \frac{\mathcal{C} \sqrt{l}}{\sigma} \| \theta' - \theta'' \|_2
\end{equation*}
where $\mathcal{C}$ is from the definition of Lemma \ref{lemma:lips}.
\end{lem}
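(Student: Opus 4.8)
The plan is to use the Stein-type representation of the gradient of a Gaussian smoothing that is already recorded in the excerpt,
\[
\nabla F_\sigma(\theta) = \frac{1}{\sigma}\,\mathbb{E}_{\epsilon\sim\mathcal{N}(0,I)}\bigl[F(\theta+\sigma\epsilon)\,\epsilon\bigr],
\]
and to reduce the Lipschitz bound on $\nabla F_\sigma$ to the Lipschitz bound on $F$ supplied by Lemma \ref{lemma:lips}. First I would subtract the two gradients under a common noise variable $\epsilon$, so that
\[
\nabla F_\sigma(\theta') - \nabla F_\sigma(\theta'') = \frac{1}{\sigma}\,\mathbb{E}_\epsilon\bigl[\bigl(F(\theta'+\sigma\epsilon) - F(\theta''+\sigma\epsilon)\bigr)\epsilon\bigr],
\]
and then push the Euclidean norm inside the expectation via Jensen's inequality (triangle inequality for vector-valued integrals), obtaining
\[
\|\nabla F_\sigma(\theta') - \nabla F_\sigma(\theta'')\|_2 \le \frac{1}{\sigma}\,\mathbb{E}_\epsilon\bigl[\,|F(\theta'+\sigma\epsilon) - F(\theta''+\sigma\epsilon)|\cdot\|\epsilon\|_2\,\bigr].
\]

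The crucial observation is that the two arguments $\theta'+\sigma\epsilon$ and $\theta''+\sigma\epsilon$ differ by exactly $\theta'-\theta''$: the common perturbation cancels, so Lemma \ref{lemma:lips} gives $|F(\theta'+\sigma\epsilon) - F(\theta''+\sigma\epsilon)| \le \mathcal{C}\,\|\theta'-\theta''\|_2$, a bound with no dependence on $\epsilon$. Pulling this deterministic factor out leaves $\mathbb{E}_\epsilon[\|\epsilon\|_2]$, which by Cauchy--Schwarz (equivalently, Jensen applied to the concave square root) is at most $\sqrt{\mathbb{E}_\epsilon[\|\epsilon\|_2^2]} = \sqrt{l}$, where $l$ is the dimension of the ambient Euclidean space in which $\mathbb{D}$ sits, so that $\epsilon\in\mathbb{R}^l$ and $\mathbb{E}\|\epsilon\|_2^2 = l$. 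Combining the three displays yields $\|\nabla F_\sigma(\theta') - \nabla F_\sigma(\theta'')\|_2 \le \frac{\mathcal{C}\sqrt{l}}{\sigma}\,\|\theta'-\theta''\|_2$, which is the claim. Prior to all this I would briefly check the hypotheses for differentiating under the integral sign that justify the Stein identity: it suffices that $F$ is Borel measurable and of at most sub-Gaussian growth, which follows from the explicit ResNet-plus-exponential-map construction of $g_\theta$ in (\ref{eq:roll1})--(\ref{eq:roll2}) together with Assumptions \ref{as:lpsf} and \ref{as:lpssigma}.

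The main obstacle is the invocation of Lemma \ref{lemma:lips} at the perturbed points $\theta'+\sigma\epsilon$ and $\theta''+\sigma\epsilon$: these leave the constrained set $\mathbb{D}$ (the $\mathbf{\Omega}$- and $\mathbf{W}_0$-blocks leave their manifolds), and, more seriously, their $\mathbf{N}$-, $\mathbf{Q}$-, $\mathbf{b}$-norms are unbounded as $\epsilon$ ranges over $\mathbb{R}^l$, whereas Lemma \ref{lemma:lips} assumes these norms are at most $D$, resp. $D_b$. To make the step rigorous one must either (i) read Lemma \ref{lemma:lips} as a statement about the extension of $F$ to an ambient neighborhood, with the constant $\mathcal{C}$ understood for a suitably enlarged $D, D_b$ and the expectation correspondingly restricted --- which is consistent with the conditioning event $\mathcal{F}_{\tau,D,D_b}$ that appears in Theorem \ref{th:conv} --- or (ii) re-run the estimates of Lemmas \ref{lemma:locbound} and \ref{lemma:lips} so as to obtain a genuinely global Lipschitz bound for $F$ with the stated $\mathcal{C}$. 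Everything else --- the Stein identity, the Jensen step, and the factor $\sqrt{l}$ from $\mathbb{E}\|\epsilon\|_2^2 = l$ --- is routine.
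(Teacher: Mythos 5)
Your proof is essentially the paper's own: the paper likewise couples the two gradients with a common Gaussian $\epsilon$, pushes the norm inside the expectation by Jensen (applied to the squared norm rather than the norm itself --- a cosmetic difference, since $\mathbb{E}\|\epsilon\|_2 \le \sqrt{\mathbb{E}\|\epsilon\|_2^2} = \sqrt{l}$ gives the same constant), invokes Lemma \ref{lemma:lips} on $F(\theta'+\sigma\epsilon)-F(\theta''+\sigma\epsilon)$ using the cancellation of the shared perturbation, and finishes with $\mathbb{E}_{\epsilon}\|\epsilon\|_2^2 = l$. The domain issue you flag --- that $\theta'+\sigma\epsilon$ leaves $\mathbb{D}$ and need not satisfy the norm hypotheses $\|\mathbf{N}\|_2,\|\mathbf{Q}\|_2\le D$, $\|\mathbf{b}\|_2\le D_b$ under which $\mathcal{C}$ was derived --- is present and unaddressed in the paper's proof as well, so it is a fair criticism of the original argument rather than a gap introduced by yours.
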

\begin{proof}
We deduce that
\begin{align*}
    \| \nabla F_\sigma(\theta') - \nabla F_\sigma (\theta'') \|_2^2 &= \frac{1}{\sigma^2} \| \mathbb{E}_{\epsilon \sim \mathcal{N} (0, I)} ( F (\theta' + \sigma \epsilon) - F (\theta' + \sigma \epsilon) ) \epsilon \|_2^2 \\
    &\leq \frac{1}{\sigma^2} \mathbb{E}_{\epsilon \sim \mathcal{N} (0, I)} \| ( F (\theta' + \sigma \epsilon) - F (\theta' + \sigma \epsilon) ) \epsilon \|_2^2 \\
    &= \frac{1}{\sigma^2} \mathbb{E}_{\epsilon \sim \mathcal{N} (0, I)} (F (\theta' + \sigma \epsilon) - F (\theta' + \sigma \epsilon))^2 \| \epsilon \|_2^2 \\
    &\leq \frac{1}{\sigma^2} \mathbb{E}_{\epsilon \sim \mathcal{N} (0, I)} \mathcal{C}^2 \| \theta' - \theta'' \|_2^2 \| \epsilon \|_2^2 \\
    &= \frac{\mathcal{C}^2}{\sigma^2} \| \theta' - \theta'' \|_2^2 \cdot \mathbb{E}_{\epsilon \sim \mathcal{N} (0, I)} \| \epsilon \|_2^2 = \frac{\mathcal{C}^2 \cdot l}{\sigma^2} \| \theta' - \theta'' \|_2^2 .
\end{align*}
\end{proof}

\begin{lem} \label{lemma:bgrad}
If Assumption \ref{as:lpsf} is satisfied, then for any $\theta \in \mathbb{D}$
\begin{equation*}
    \mathbb{E} \| \widetilde{\nabla} F_\sigma (\theta) \|_2^2 \leq \frac{K^2 M^2 l}{\sigma^2}.
\end{equation*}
\end{lem}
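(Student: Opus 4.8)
\emph{Proof strategy.} The plan is to bound the estimator $\widetilde{\nabla} F_\sigma(\theta) = \frac{1}{\sigma v}\sum_{w=1}^v F(\theta + \sigma\epsilon_w)\epsilon_w$ by pairing a \emph{pathwise} (deterministic) uniform bound on $F$ with the Gaussian second-moment identity $\mathbb{E}_{\epsilon\sim\mathcal{N}(0,I)}\|\epsilon\|_2^2 = l$, the same $l$ that appears in Lemma~\ref{lemma:lpsgr} (the dimension of the ambient space of $\theta$).

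First I would record that $F$ is uniformly bounded on all of $\mathbb{D}$. Indeed, by \eqref{eq:ldef} and \eqref{eq:esrollout} we have $F(\theta') = \sum_{k=1}^K l_k$ for any $\theta'\in\mathbb{D}$, with each $l_k = \mathrm{env}^{(2)}(\mathbf{s}_{k-1},\mathbf{a}_{k-1})$, and Assumption~\ref{as:lpsf} gives $|\mathrm{env}^{(2)}(\cdot,\cdot)|\le M$; the triangle inequality then yields $|F(\theta')|\le KM$. In particular $|F(\theta+\sigma\epsilon_w)|\le KM$ holds for \emph{every} realization of $\epsilon_w$, not merely in expectation — this is the observation that trivializes the rest.

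Next I would apply convexity of $x\mapsto\|x\|_2^2$ (Jensen's inequality) to pull the average over $w$ out of the norm:
\[
\Bigl\| \tfrac{1}{v}\sum_{w=1}^v F(\theta+\sigma\epsilon_w)\epsilon_w \Bigr\|_2^2 \;\le\; \tfrac{1}{v}\sum_{w=1}^v \bigl\| F(\theta+\sigma\epsilon_w)\epsilon_w \bigr\|_2^2 \;=\; \tfrac{1}{v}\sum_{w=1}^v F(\theta+\sigma\epsilon_w)^2\,\|\epsilon_w\|_2^2 .
\]
Taking expectations, bounding $F(\theta+\sigma\epsilon_w)^2 \le K^2 M^2$ pathwise, and then using $\mathbb{E}\|\epsilon_w\|_2^2 = l$ term by term gives
\[
\mathbb{E}\Bigl\| \tfrac{1}{v}\sum_{w=1}^v F(\theta+\sigma\epsilon_w)\epsilon_w \Bigr\|_2^2 \;\le\; \tfrac{1}{v}\sum_{w=1}^v K^2 M^2\,\mathbb{E}\|\epsilon_w\|_2^2 \;=\; K^2 M^2 l ,
\]
and dividing by $\sigma^2$ (the remaining prefactor) yields exactly $\mathbb{E}\|\widetilde{\nabla} F_\sigma(\theta)\|_2^2 \le K^2 M^2 l/\sigma^2$.

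There is essentially no obstacle here; the only points to watch are (i) that the bound $|F|\le KM$ must be invoked pathwise, so it can be extracted before evaluating $\mathbb{E}\|\epsilon_w\|_2^2$, and (ii) that the Jensen step is what lets us avoid expanding the square and controlling the cross terms $\mathbb{E}[F(\theta+\sigma\epsilon_w)F(\theta+\sigma\epsilon_{w'})\,\epsilon_w^\top\epsilon_{w'}]$ for $w\ne w'$ altogether. (Alternatively one could expand, note these cross terms equal $\sigma^2\|\nabla F_\sigma(\theta)\|_2^2\ge 0$, bound that separately by $K^2M^2 l/\sigma^2$ via Jensen applied to $\nabla F_\sigma$, and still land on the same constant — but the convexity route is cleaner.)
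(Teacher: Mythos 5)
Your proof is correct and follows essentially the same route as the paper's: both apply Jensen/convexity of $\|\cdot\|_2^2$ to reduce $\|\frac{1}{v}\sum_w F(\theta+\sigma\epsilon_w)\epsilon_w\|_2^2$ to $\frac{1}{v}\sum_w F(\theta+\sigma\epsilon_w)^2\|\epsilon_w\|_2^2$, invoke the pathwise bound $|F|\le KM$ from Assumption~\ref{as:lpsf}, and finish with $\mathbb{E}\|\epsilon\|_2^2=l$. No discrepancies.
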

\begin{proof}
By using Assumption \ref{as:lpsf} and that $| F (\theta) | = | \sum_{k = 1}^K l_k | \leq K M$ we derive
\begin{align*}
    ( \mathbb{E} \| \widetilde{\nabla} F_\sigma (\theta) \|_2 )^2 &\leq \mathbb{E} \| \widetilde{\nabla}_\theta F_\sigma (\theta) \|_2^2 = \frac{1}{v^2 \sigma^2} \mathbb{E}_{\{\epsilon_w \sim \mathcal{N} (0, I) \}} \| \sum_{w = 1}^v F(\theta + \sigma \epsilon_w) \epsilon_w \|_2^2 \\
    &\leq \frac{1}{v \sigma^2} \mathbb{E}_{\{\epsilon_w \sim \mathcal{N} (0, I) \}} \sum_{w = 1}^v \| F(\theta + \sigma \epsilon_w) \epsilon_w \|_2^2 \\
    &= \frac{1}{\sigma^2} \mathbb{E}_{\epsilon \sim \mathcal{N} (0, I)} \| F(\theta + \sigma \epsilon) \epsilon \|_2^2 = \frac{1}{\sigma^2} \mathbb{E}_{\epsilon \sim \mathcal{N} (0, I)} F(\theta + \sigma \epsilon)^2 \| \epsilon \|_2^2 \\
    &\leq \frac{1}{\sigma^2} \mathbb{E}_{\epsilon \sim \mathcal{N} (0, I)} K^2 M^2 \| \epsilon \|_2^2 = \frac{K^2 M^2}{\sigma^2} \mathbb{E}_{\epsilon \sim \mathcal{N} (0, I)} \| \epsilon \|_2^2 = \frac{K^2 M^2 l}{\sigma^2}
\end{align*}
\end{proof}

\begin{proof}[Proof of Theorem \ref{th:conv}]

Hereafter we assume that $\mathcal{F}_{\tau,D,D_b}$ holds for random iterates $\theta^{(0)}, \dots, \theta^{(\tau)}$. According to Lemma \ref{lemma:lpsgr}, $\frac{\mathcal{C} \sqrt{l}}{\sigma}$ is a bound on $F_\sigma (\theta)$'s Hessian on $\{ \theta \in \mathbb{D} \, | \, \| \mathbf{N} \|_2, \| \mathbf{Q} \|_2 < D, \mathbf{b} < D_b \}$. We apply \cite[Appendix]{srgd} to derive that for any $\tau' \leq \tau$
\begin{equation} \label{eq:bnd2}
    F(\theta^{(\tau')}) - F(\theta^{(\tau' - 1)}) \geq \alpha_{\tau'} \nabla_\mathcal{R} F_\sigma (\theta^{(\tau' - 1)})^\top \widetilde{\nabla}_\mathcal{R} F_\sigma (\theta^{(\tau' - 1)}) - \frac{\mathcal{C} \sqrt{l}}{2 \sigma} \alpha_{\tau'}^2 \| \widetilde{\nabla}_\mathcal{R} F_\sigma (\theta^{(\tau' - 1)}) \|_F^2
\end{equation}
Let $\mathcal{A}$ denote a sigma-algebra associated with $\theta^{(1)}, \dots, \theta^{(\tau' - 1)}$. We use that $\mathbb{E}[\widetilde{\nabla}_\mathcal{R} F_\sigma (\theta^{(\tau' - 1)}) | \mathcal{A}] = \nabla_\mathcal{R} F_\sigma (\theta^{(\tau' - 1)})$ and $\mathbb{E}[ \| \widetilde{\nabla}_\mathcal{R} F_\sigma (\theta^{(\tau' - 1)}) \|_2^2 | \mathcal{A}] \leq \frac{K^2 M^2 l}{\sigma^2}$ (Lemma \ref{lemma:bgrad}) and take an expectation of (\ref{eq:bnd1}) conditioned on $\mathcal{A}$:
\begin{equation*}
    \mathbb{E} [F(\theta^{(\tau')}) | \mathcal{A} ] - F(\theta^{(\tau' - 1)}) \geq \alpha_{\tau'} \| \nabla_\mathcal{R} F_\sigma (\theta^{(\tau' - 1)}) \|_2^2 - \frac{\mathcal{C} \sqrt{l}}{2 \sigma} \alpha_{\tau'}^2 \cdot \frac{K^2 M^2 l}{\sigma^2}.
\end{equation*}
Regroup the last inequality and take a full expectation to obtain
\begin{equation*}
    \alpha_{\tau'} \mathbb{E} \| \nabla_\mathcal{R} F_\sigma (\theta^{(\tau' - 1)}) \|_2^2 \leq \mathbb{E} F(\theta^{(\tau')}) - \mathbb{E} F(\theta^{(\tau' - 1)}) + \frac{\mathcal{C} \sqrt{l}}{2 \sigma} \alpha_{\tau'}^2 \cdot \frac{K^2 M^2 l}{\sigma^2}.
\end{equation*}
Perform a summation of the last inequality for all $\tau' \in \{ 1, \dots, \tau \}$:
\begin{align*}
    \sum_{\tau' = 1}^\tau \alpha_{\tau'} \mathbb{E} \| \nabla_\mathcal{R} F_\sigma (\theta^{(\tau' - 1)}) \|_2^2 &\leq \mathbb{E} F(\theta^{(\tau)}) - F(\theta^{(0)}) +  \frac{\mathcal{C} K^2 M^2l^{\frac32}}{2 \sigma^3} \sum_{\tau' = 1}^\tau \alpha_{\tau'}^2 \\
    &\leq K M - F(\theta^{(0)}) +  \frac{\mathcal{C} K^2 M^2l^{\frac32}}{2 \sigma^3} \sum_{\tau' = 1}^\tau \alpha_{\tau'}^2
\end{align*}
since $F(\theta) \leq K M$ for all $\theta \in \mathbb{D}$. We next observe that
\begin{align*}
    \min_{0 \leq \tau' < \tau} \mathbb{E} \| \nabla_\mathcal{R} F_\sigma (\theta^{(\tau')}) \|_2^2 &= \frac{\sum_{\tau' = 1}^\tau \alpha_{\tau'} \min_{0 < \tau' \leq \tau} \mathbb{E} \| \nabla_\mathcal{R} F_\sigma (\theta^{(\tau' - 1)}) \|_2^2}{\sum_{\tau' = 1}^\tau \alpha_{\tau'}} \\
    &\leq \frac{1}{\sum_{\tau' = 1}^\tau \alpha_{\tau'}} ( K M - F(\theta^{(0)})) +  \frac{\sum_{\tau' = 1}^\tau \alpha_{\tau'}^2}{\sum_{\tau' = 1}^\tau \alpha_{\tau'}} \cdot \frac{\mathcal{C} K^2 M^2l^{\frac32}}{2 \sigma^3} .
\end{align*}
The proof is concluded by observing that $\sum_{\tau' = 1}^\tau \alpha_{\tau'} = \Omega (\tau^{0.5})$ and $\sum_{\tau' = 1}^\tau \alpha_{\tau'}^2 = O (\log \tau) = o(\tau^\epsilon)$ for any $\epsilon > 0$.

\end{proof}

\end{document}